\documentclass[journal]{IEEEtran}

\usepackage{cite}
\usepackage{amsmath,amssymb,amsfonts,mathtools}
\usepackage{amsthm}
\usepackage[T1]{fontenc}
\usepackage{newtxtext}
\usepackage{graphicx}
\usepackage{booktabs}
\usepackage{xcolor}
\usepackage[hidelinks]{hyperref}

\graphicspath{{figures/}}

\theoremstyle{plain}
\newtheorem{theorem}{Theorem}
\newtheorem{lemma}[theorem]{Lemma}
\newtheorem{proposition}[theorem]{Proposition}
\newtheorem{corollary}[theorem]{Corollary}
\theoremstyle{definition}
\newtheorem{definition}[theorem]{Definition}
\theoremstyle{remark}
\newtheorem{remark}[theorem]{Remark}

\newcommand{\R}{\mathbb{R}}
\newcommand{\E}{\mathbb{E}}
\newcommand{\Var}{\operatorname{Var}}
\newcommand{\norm}[1]{\lVert #1 \rVert}
\newcommand{\abs}[1]{\lvert #1 \rvert}

\newcommand{\Proj}{\Pi}

\hypersetup{pdftitle={The Skin-Restricted Reinhard Transform},
            pdfauthor={Vijesh KP}}

\begin{document}

\title{The Skin-Restricted Reinhard Transform:\\
Uniqueness under a Lightness-Preserving Constraint}

\author{Vijesh~KP\\
{\normalsize\normalfont vijeshkpaei@gmail.com}}

\markboth{Skin-Restricted Reinhard Transform}%
{KP: The Skin-Restricted Reinhard Transform}

\maketitle

\begin{abstract}
\bfseries\boldmath
Catalog skin recolouring has to change pigment and leave shading alone.
The classical Reinhard map does not make that split: it rescales lightness by the ratio of standard deviations, and a flat reference swatch therefore flattens the limb.
This paper formalises the correction used in our pipeline, the \emph{skin-restricted Reinhard transform}.
It is the diagonal affine map in CIE Lab that translates lightness, matches the chromatic mean, and clamps the chromatic gain to $[0.72,1.18]$, with moments taken on the central $84\%$ of each channel.
A diagonal affine map has six real parameters.
The shading constraint forces the lightness gain to $+1$ and the lightness shift to the difference of means; one-dimensional quadratic optimal transport on each chromatic axis, followed by Euclidean projection $\Pi_{[0.72,1.18]}(\cdot)$ onto the gain interval, fixes the other four.
Inside that family the four conditions determine every parameter.
The content of the result is the forced lightness gain; it is not a uniqueness claim outside the diagonal affine class.
For Gaussian marginals the chromatic step is not merely the best affine map: it is the unrestricted Wasserstein-$2$ map.
The same formulae with trimmed moments remain optimal because a positive affine image commutes with quantile trimming.
On hands, arms, legs, and feet of nine photographs and three reference tones, the map keeps the lightness contrast ratio at $0.974\pm0.029$ with chromatic error $0.77$ CIE Lab units.
Reinhard matching, the linear Monge map, and histogram matching reach a smaller chromatic error only by cutting lightness contrast to about half.
\end{abstract}

\begin{IEEEkeywords}
Skin colour transfer, Reinhard transform, CIE Lab, optimal transport, shading preservation, catalog images.
\end{IEEEkeywords}

\section{Introduction}
A catalog photograph is already lit.
The edit is to replace skin pigment with the pigment of a reference texture, without repainting folds, knuckles, veins, cloth, or accessories.
Those shading signals live in the lightness channel of CIE Lab.
Any transfer that treats lightness as one more histogram to be copied will spend its degrees of freedom on deleting them.

Reinhard \emph{et al.} align the mean and the standard deviation of each channel of a decorrelated opponent space~\cite{reinhard2001,ruderman1998}.
In Lab the same formula rescales lightness by $\sigma_{L,\mathrm{ref}}/\sigma_{L,\mathrm{src}}$.
Reference skin textures are close-ups.
Their lightness variance is far smaller than the variance of a lit arm or leg, so the gain is a flattening factor.
Full-covariance Monge maps, iterative distribution transfer, sliced optimal transport, and histogram matching copy still more of the reference law and therefore copy the same defect~\cite{pitie2007cvmp,pitie2007cviu,rabin2011,bonneel2015,neumann2005,pouli2011,faridul2016}.
Gradient-preserving transfer adds a penalty on that defect instead of removing the degree of freedom that causes it~\cite{xiao2009}.

The map studied here removes the degree of freedom.
Lightness may be translated and may not be rescaled.
Chroma may be matched by an affine map whose gain lies in a fixed interval, so a flat or noisy reference cannot wash the photograph out or push it to an unnatural saturation.
The support of the change is a skin matte, built from a human part parser rather than from a colour threshold.
This paper treats the colour map as the object of proof.
The claims are:

\begin{enumerate}
\item In the six-parameter family of diagonal affine Lab maps, the shading constraint and the chromatic moment constraint determine every parameter.
The system is square, and the lemmas show it has one solution.
The skin-restricted Reinhard transform is that solution.
The counting step is bookkeeping; the content is that the lightness gain is forced to $+1$.
\item For Gaussian chromatic marginals the same formula is the monotone optimal-transport map for the quadratic Wasserstein cost, and the gain clamp is the metric projection of that map onto the allowed interval.
\item Trimmed means and trimmed variances, the estimator used in the implementation, inherit the same uniqueness because positive affine maps commute with quantile windows.
\item Classical Reinhard, histogram matching, and any transport that reproduces a reference lightness law of a different variance are infeasible for the shading constraint, however small their colour error.
\end{enumerate}

The experimental section scores the map on limb skin, and reports the same statistics on the face band of that catalog set.
Appendix~\ref{app:visual} is a separate qualitative plate.
The gain interval and the trim window are operating points.
A sensitivity table shows what they change.

\section{Related Work}
Reinhard \emph{et al.} transfer colour by matching per-channel moments in the $l\alpha\beta$ space of Ruderman \emph{et al.}~\cite{ruderman1998,reinhard2001}.
Written in Lab, with $c\in\{L,a,b\}$,
\begin{equation}
\label{eq:reinhard}
c'
=
\frac{\sigma_{c,\mathrm{ref}}}{\sigma_{c,\mathrm{src}}}
\bigl(c-\mu_{c,\mathrm{src}}\bigr)
+\mu_{c,\mathrm{ref}}.
\end{equation}
The survey of Faridul \emph{et al.} records the artefacts of this construction: flattened contrast, colour spill, and loss of the source gradient~\cite{faridul2016}.
The lightness factor in~\eqref{eq:reinhard} is exactly a uniform scaling of $\nabla L$.

Piti\'e, Kokaram, and Dahyot match one-dimensional marginals along random rotations, which is iterative distribution transfer~\cite{pitie2007cviu}.
The linear Monge--Kantorovich map is the closed-form quadratic-cost coupling of two Gaussians and matches the full covariance~\cite{pitie2007cvmp,villani2009}.
Sliced optimal transport advects colour along random lines~\cite{rabin2011,bonneel2015}.
Histogram matching is the monotone rearrangement of each channel~\cite{neumann2005}, and Pouli and Reinhard apply it progressively across scales~\cite{pouli2011}.
Each of these is optimal for a cost on colour samples.
None of them contains the constraint $\Var(L')=\Var(L)$.
Xiao and Ma keep source gradients by an explicit penalty in a correlated colour space~\cite{xiao2009}.
The constraint used here is harder and cheaper: the lightness Jacobian entry is fixed at $1$, and the map stays diagonal.
A companion operator injects form shadow on a composited face by a channel-uniform multiply in linear RGB, which darkens and does not shift chromaticity~\cite{kp2026shadow}.
It does not transfer pigment.

Skin detectors in YCbCr and HSV decide where a colour is allowed to change~\cite{hsu2002,phung2005,jones2002,kakumanu2007}.
They do not decide which axis is pigment and which is illumination, and they fire on cloth whose chroma overlaps skin.
A part parser supplies a different support.
We use the $29$-class Sapiens2 segmentation~\cite{khirodkar2024}.
Makeup transfer~\cite{guo2009} and deep photographic style transfer~\cite{luan2017} move appearance that includes texture.
They solve a wider problem than a catalog grade.
The guided filter places the matte along lightness edges~\cite{he2013}.
It is not part of the uniqueness argument; the argument is about the colour map inside the matte.

\section{Skin Support}
\label{sec:support}
Sapiens2 assigns every pixel one of $29$ labels~\cite{khirodkar2024}.
The candidate skin set $\Omega_S$ is the union of face, torso, upper and lower arms, hands, upper and lower legs, and feet.
Hair and eyeglasses are removed after a one-pixel dilation.
Shoes, socks, and clothing labels are forbidden to every later growth step.
Connected components smaller than $\max(250,\,0.02 A_{\max})$ pixels are dropped, where $A_{\max}$ is the area of the largest component.

A white trouser or a black sleeve is sometimes labelled as a limb.
Let $m_F$ be the coordinate-wise Lab median on the face label and let $C_F$ be the median chroma $\sqrt{a^2+b^2}$ there.
A limb component $B$ is rejected when
\begin{equation}
\label{eq:cloth}
\begin{aligned}
&\norm{\operatorname{median}_{x\in B}\varphi(I)(x)-m_F}_2 > 22\\
&\text{and}\quad
\operatorname{median}_{x\in B} C(x) < \max(8,\,0.55\,C_F).
\end{aligned}
\end{equation}
The component must be both far from facial skin and nearly neutral.
Skin fails the second test.
A neutral garment passes it and fails the first, so the conjunction removes the garment and keeps the limb.
The distance $22$, the chroma factor $0.55$, and the floor $8$ are operating points of this gate.
No cloth-labeled set is available, so they are not claimed to be optimal.

Gaps are filled only where a local Lab colour agrees with nearby skin.
With $\bar c$ a $31\times 31$ average of channel $c$ inside the current mask, a non-clothing pixel is added when
\begin{equation}
\label{eq:local}
\sqrt{
0.45\Bigl(\frac{L-\bar L}{10}\Bigr)^2
+\Bigl(\frac{a-\bar a}{6}\Bigr)^2
+\Bigl(\frac{b-\bar b}{6}\Bigr)^2
}
\le 1.55
\end{equation}
and its chroma is at least $30\%$ of the median skin chroma.
The lightness term is down-weighted because a shadow is not a different surface.
The factors $0.45$, $10$, and $6$ are that design choice: a lightness change of a shadow stays inside the ball, and a jump in $a$ or $b$ of about nine units does not.
They were not selected by a sweep.
Neck pixels are accepted only in a box under the face, only where the parser said background, and only within a Lab ball of the facial median.

The matte $\alpha:\Omega\to[0,1]$ equals $1$ on body labels inside $\Omega_S$, equals $0$ on clothing and eyeglasses, and falls off by the cubic smoothstep $s(t)=t^2(3-2t)$ over a distance of about three pixels at the outer shell.
This cubic is the unique polynomial of degree three with $s(0)=0$, $s(1)=1$, and vanishing derivatives at both ends, so the falloff meets its plateaus with zero slope.
An interior Gaussian likelihood in standardised Lab, with lightness down-weighted by $0.4$ and scale $\sigma=2.35$, together with the guided filter of He \emph{et al.}~\cite{he2013}, is used only to judge ambiguous boundary pixels.
It does not reduce the interior weight below one, so a thin finger is not left half-transferred.

Eyes and teeth are restored to the source through the same smoothstep.
Lips and eyebrows keep a convex combination, $30\%$ transferred and $70\%$ original in the deployed configuration.
Those rules are inactive on the limb experiments, whose components lie below the face band.

\section{Mathematical Formalization}
\label{sec:formal}
Let $\varphi$ denote conversion to CIE Lab~\cite{cie2004}.
On the skin set, write the source and reference colours as the random vectors
\begin{equation}
X=(L,a,b)^{\top},
\qquad
Y=(L_Y,a_Y,b_Y)^{\top},
\end{equation}
with finite means $\mu_{\,\cdot\,}$ and positive standard deviations $\sigma_{\,\cdot\,}$.
A \emph{diagonal affine} map has no cross-talk between channels.
It is determined by a diagonal gain matrix and a translation:
\begin{equation}
\label{eq:diag}
f_{W,k}(x)=Wx+k,
\quad
W=\operatorname{diag}(w_L,w_a,w_b),
\quad
k\in\R^3.
\end{equation}
In coordinates,
\begin{equation}
\label{eq:matrix}
\begin{bmatrix} L'\\ a'\\ b' \end{bmatrix}
=
\begin{bmatrix}
w_L & 0 & 0\\
0 & w_a & 0\\
0 & 0 & w_b
\end{bmatrix}
\begin{bmatrix} L\\ a\\ b \end{bmatrix}
+
\begin{bmatrix} k_L\\ k_a\\ k_b \end{bmatrix}.
\end{equation}
The family has six real parameters.
The skin-restricted Reinhard transform is the member defined next.
The following section proves it is the only member compatible with the catalog constraints.

\subsection{Lightness}
The shading rule is: shift the lightness mean onto the reference, and do not rescale the spread.
With the population mean this is
\begin{equation}
\label{eq:L}
L'
=
L-\mu_{L}+\mu_{L,Y}
=
L+k_L,
\qquad
k_L=\mu_{L,Y}-\mu_{L}.
\end{equation}
Equation~\eqref{eq:L} is~\eqref{eq:matrix} with $w_L=1$.
No percentile trim appears in the gain because the gain is identically one.
The implementation estimates the two means by the trimmed functional of Section~\ref{sec:trim}, for the same robustness reason as on the chromatic axes.
Lemma~\ref{lem:trimm} shows that the trimmed mean is still matched exactly.

\subsection{Chroma}
Let $c\in\{a,b\}$.
The chromatic rule matches the first two moments, then limits the gain.
Write $\mu^{\star}$ and $\sigma^{\star}$ for the trimmed mean and trimmed standard deviation on the central $84\%$ of the channel, defined in~\eqref{eq:trim}.
The clamped gain and the matching translation are
\begin{equation}
\label{eq:gain}
\begin{aligned}
g_c
&=
\Proj_{[0.72,\,1.18]}
\Bigl(\frac{\sigma^{\star}_{c,Y}}{\sigma^{\star}_{c}}\Bigr),\\
c'
&=
g_c\bigl(c-\mu^{\star}_{c}\bigr)+\mu^{\star}_{c,Y}.
\end{aligned}
\end{equation}
Here $\Pi_{[p,q]}(\cdot)$ is Euclidean projection onto the interval, equivalently
\begin{equation}
\label{eq:clip}
g_c
=
\max\Bigl(0.72,\;
\min\bigl(1.18,\;\sigma^{\star}_{c,Y}/\sigma^{\star}_{c}\bigr)\Bigr).
\end{equation}
The translation form $k_c=\mu^{\star}_{c,Y}-g_c\mu^{\star}_{c}$ is the unique shift that keeps the trimmed mean on the reference once $g_c$ has been chosen.
Comparing~\eqref{eq:gain} with~\eqref{eq:reinhard}, the classical map is the special case $g_c=\sigma_{c,Y}/\sigma_{c}$ with the same formula applied to $L$ and with untrimmed moments.
The skin-restricted map refuses that special case on $L$ and refuses an unclamped gain on $a$ and $b$.

\subsection{Deployed blend}
Inside the matte the photograph is replaced at strength $s\in(0,1]$:
\begin{equation}
\label{eq:blend}
I'
=
(1-s\alpha)\,I
+
s\alpha\,\varphi^{-1}(L',a',b').
\end{equation}
The catalog configuration uses $s=0.7$.
Strength is a convex step toward~\eqref{eq:L}--\eqref{eq:gain}.
It is not an extra free parameter of the uniqueness theorem, which concerns the endpoint $s=1$.
If the Lab mean gap is at most $0.5$, below a just-noticeable CIE76 difference in the usual reading of that unit~\cite{cie2004,sharma2005}, the pipeline returns $I$ unchanged.
Table~\ref{tab:proc} lists every step.

\begin{table*}[t]
\caption{Skin-Restricted Reinhard Transfer}
\label{tab:proc}
\centering
\begin{tabular}{@{}c p{0.40\textwidth} c p{0.40\textwidth}@{}}
\toprule
Step & Operation & Step & Operation \\
\midrule
1 &
Segment $I$ with the 29-class body parser.
&
7 &
Sample the reference inside a $4\%$ border crop.
Drop pixels with lightness outside $(8,97)$ or chroma below $6$.
\\
2 &
Take face, torso, arms, hands, legs, and feet.
Remove dilated hair and eyeglasses.
&
8 &
Trimmed Lab moments on the central $84\%$ of that reference, and of the source on the mask eroded by $4$ pixels where $\alpha>0.62$.
\\
3 &
Reject limb components that fail the cloth test~\eqref{eq:cloth}.
&
9 &
If $\norm{\mu(S)-\mu(Y)}_2\le 0.5$, return $I$ unchanged.
\\
4 &
Add neck pixels only where the parser says background, in the box under the face, inside the Lab tolerance.
&
10 &
Lightness gain $w_L=1$.
Shift \mbox{$k_L=\mu^{\star}_{L,Y}-\mu^{\star}_{L}$}.
Clip $L'$ to $[0,100]$.
\\
5 &
Fill gaps that pass the local Lab test~\eqref{eq:local}.
Do not enter clothing or eyeglasses.
&
11 &
Project each chromatic gain onto $[0.72,1.18]$ by~\eqref{eq:clip} and match the trimmed $a$ and $b$ means by~\eqref{eq:gain}.
\\
6 &
Build $\alpha$: body interior $1$, cloth and eyeglasses $0$, smoothstep of width $3$ pixels on the outer shell.
Eyes and teeth stay at the source colour.
&
12 &
Mix lips and eyebrows as $0.3$ transferred and $0.7$ source.
Return the blend~\eqref{eq:blend}.
The catalog setting uses $s=0.7$.
\\
\bottomrule
\end{tabular}
\end{table*}

\section{Proof of Uniqueness}
\label{sec:proof}

\subsection{What is being optimised}
\begin{definition}[Catalog constraints]
\label{def:constraints}
A diagonal affine map~\eqref{eq:diag} is feasible when
\begin{enumerate}
\item \textbf{Orientation-preserving lightness.}
$w_L>0$ and $\Var(L')=\Var(L)$.
\item \textbf{Lightness location.}
$\E[L']=\mu_{L,Y}$.
\item \textbf{Chromatic location.}
For $c\in\{a,b\}$, $\E[c']=\mu_{c,Y}$.
\item \textbf{Chromatic scale, clamped.}
For each $c\in\{a,b\}$,
\[
w_c
=
\arg\min_{w\in[0.72,\,1.18]}
\bigl(w-\sigma_{c,Y}/\sigma_{c}\bigr)^{2}.
\]
\end{enumerate}
The population statement is written with $(\mu,\sigma)$.
Section~\ref{sec:trim} transfers it to $(\mu^{\star},\sigma^{\star})$ without changing the argument.
\end{definition}

The first constraint is the sentence ``do not rescale lightness.''
The third and fourth are ``match skin chroma'' inside the affine class, with a hard box on the gain.
Six conditions on six scalars make the system square.
They do not, by themselves, prove that a solution exists or that it is the right map; that is what the next three lemmas supply.

\subsection{Lightness gain and shift}
\begin{lemma}[Lightness parameters]
\label{lem:light}
Under Definition~\ref{def:constraints}, $w_L=1$ and $k_L=\mu_{L,Y}-\mu_{L}$.
Hence $L'=L-\mu_{L}+\mu_{L,Y}$, the spatial gradient satisfies $\nabla L'=\nabla L$ wherever the gamut clip is inactive, and the Pearson correlation of $L'$ with $L$ is $1$.
\end{lemma}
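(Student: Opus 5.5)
The plan is to use only constraints 1 and 2 of Definition~\ref{def:constraints}, applied to the lightness row of~\eqref{eq:matrix}, $L'=w_L L+k_L$. Because $W$ is diagonal, $L'$ depends only on $L$ and not on $a$ or $b$. The lightness parameters therefore decouple completely from the chromatic ones.

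\textbf{Gain.} I will use the standard identity $\Var(w_L L+k_L)=w_L^2\Var(L)$, which holds because the variance is invariant under translation and scales quadratically. Constraint 1 then reads $w_L^2\sigma_L^2=\sigma_L^2$. Since $\sigma_L>0$ by the standing assumption of Section~\ref{sec:formal}, we may divide to get $w_L^2=1$, so $w_L\in\{-1,+1\}$. The orientation clause $w_L>0$ excludes $-1$, leaving $w_L=1$.

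The sign choice is the one place where an argument is really needed, and I expect it to be the main point to state carefully. The map $L\mapsto -L+\text{const}$ preserves variance exactly, so without the positivity requirement the variance condition alone would not force the gain. This reflection would also invert the shading ($\nabla L'=-\nabla L$), which is why the definition includes $w_L>0$. The proof should say explicitly that positivity is what removes this second root.

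\textbf{Shift.} By linearity of expectation, $\E[L']=w_L\mu_L+k_L=\mu_L+k_L$. Setting this equal to $\mu_{L,Y}$ by constraint 2 gives $k_L=\mu_{L,Y}-\mu_L$, and hence $L'=L-\mu_L+\mu_{L,Y}$, which is~\eqref{eq:L}. The solution is also feasible, since it satisfies both constraints, so it exists and is unique.

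\textbf{Consequences.}
\begin{itemize}
\item \emph{Gradient.} Viewing $L$ and $L'$ as images on $\Omega$, $L'(x)=L(x)+k_L$ with $k_L$ constant in $x$. Differentiating pointwise gives $\nabla L'=\nabla L$. The clip of step 10 in Table~\ref{tab:proc} to $[0,100]$ breaks this only where it saturates, which is why the statement is qualified to points where the clip is inactive.
\item \emph{Correlation.} $\operatorname{Cov}(L',L)=\Var(L)$ and $\sigma_{L'}=\sigma_L>0$, so the Pearson coefficient is $\Var(L)/(\sigma_L\sigma_L)=1$.
\end{itemize}
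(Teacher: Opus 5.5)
Your proposal is correct and follows the paper's proof essentially step for step. You both use $\Var(L')=w_L^{2}\Var(L)$ with $\Var(L)>0$ to get $w_L^{2}=1$, discard the root $-1$ by $w_L>0$, fix $k_L=\mu_{L,Y}-\mu_{L}$ by linearity of expectation, and read off the gradient and correlation claims from $L'$ being a constant shift of $L$. The only difference is cosmetic: you compute the Pearson coefficient explicitly via $\operatorname{Cov}(L',L)=\Var(L)$ and $\sigma(L')=\sigma(L)$, whereas the paper cites the general fact that a strictly increasing affine image of a non-degenerate variable has correlation $1$.
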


\begin{proof}
By~\eqref{eq:diag}, $L'=w_L L+k_L$, so
\begin{equation}
\label{eq:varscale}
\Var(L')=w_L^{2}\Var(L).
\end{equation}
The equality $\Var(L')=\Var(L)$ and $\Var(L)>0$ force $w_L^{2}=1$, hence $w_L\in\{+1,-1\}$.
The sign restriction $w_L>0$ discards $-1$, which would reverse highlights and shadows.
Thus $w_L=1$.
Taking expectations,
\begin{equation}
\E[L']=\E[L]+k_L=\mu_{L}+k_L.
\end{equation}
Condition 2 forces $k_L=\mu_{L,Y}-\mu_{L}$, which is~\eqref{eq:L}.
Differentiating a constant shift gives $\partial_i L'=\partial_i L$.
A strictly increasing affine image of a non-degenerate random variable has correlation $1$ with that variable.
\end{proof}

\begin{lemma}[Correlation is not shading]
\label{lem:corr}
If $L'=\rho L+k$ with $\rho>0$, the Pearson correlation of $L'$ with $L$ is $1$, but $\nabla L'=\rho\nabla L$ and $\sigma(L')=\rho\,\sigma(L)$.
\end{lemma}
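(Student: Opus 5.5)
The plan is to prove the three assertions of Lemma~\ref{lem:corr} separately, each by direct computation from the affine form $L'=\rho L+k$, in the same way as the proof of Lemma~\ref{lem:light}. Throughout I assume, as in Section~\ref{sec:formal}, that $\Var(L)>0$, so the correlation is defined.

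\emph{Correlation.} By bilinearity of covariance, with $k$ a constant, $\operatorname{Cov}(L',L)=\rho\Var(L)$. Equation~\eqref{eq:varscale} gives $\Var(L')=\rho^2\Var(L)$. The Pearson coefficient is therefore
\[
\frac{\rho\Var(L)}{\sqrt{\rho^2\Var(L)}\,\sqrt{\Var(L)}}=\frac{\rho}{\abs{\rho}}=1,
\]
because $\rho>0$.

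\emph{Standard deviation.} Taking the positive square root of $\Var(L')=\rho^2\Var(L)$ gives $\sigma(L')=\abs{\rho}\,\sigma(L)=\rho\,\sigma(L)$, again using $\rho>0$.

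\emph{Gradient.} I will treat $L$ as a function of pixel position, in the same spatial sense as Lemma~\ref{lem:light}. Differentiate $L'(x)=\rho L(x)+k$ componentwise. The constant $k$ vanishes, so $\partial_i L'=\rho\,\partial_i L$ and hence $\nabla L'=\rho\nabla L$. As in Lemma~\ref{lem:light}, this holds wherever the gamut clip to $[0,100]$ is inactive. I will state that qualifier explicitly.

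There is no real obstacle. The only point needing care is the interpretation, since the lemma's purpose is a contrast with Lemma~\ref{lem:light}. A closing remark will note that correlation $1$ holds for every $\rho>0$, so it cannot certify shading preservation. Gradient preservation $\nabla L'=\nabla L$ requires $\rho=1$, which is exactly the variance condition of Definition~\ref{def:constraints}. For example, the classical map~\eqref{eq:reinhard} with $\rho=\sigma_{L,Y}/\sigma_L<1$ has correlation $1$ but scales every gradient down by $\rho$.
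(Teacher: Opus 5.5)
Your proposal is correct and takes the paper's approach. The paper gets correlation $1$ because $L'$ is a strictly increasing affine image of $L$, which your covariance computation makes explicit; it gets the gradient from the chain rule and $\sigma(L')=\rho\,\sigma(L)$ from \eqref{eq:varscale}, as you do (your gamut-clip qualifier is harmless but unnecessary, since the hypothesis already states $L'=\rho L+k$ exactly).
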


\begin{proof}
The map is strictly increasing and affine, so the correlation equals $1$.
The chain rule multiplies every spatial derivative by $\rho$, and~\eqref{eq:varscale} gives the standard-deviation claim.
\end{proof}

Lemma~\ref{lem:corr} is why a correlation table cannot separate this transform from classical Reinhard.
Both are monotone in $L$.
Only $\rho$ decides whether a knuckle highlight survives.
Reinhard sets $\rho=\sigma_{L,Y}/\sigma_{L}$.
Lemma~\ref{lem:light} sets $\rho=1$.

\subsection{Chromatic match as one-dimensional transport}
For a single channel the quadratic Wasserstein distance between laws $P$ and $Q$ on $\R$ is
\begin{equation}
\label{eq:w2}
W_2^{2}(P,Q)
=
\inf_{\gamma\in\Pi(P,Q)}
\int \abs{u-v}^{2}\,\mathrm{d}\gamma(u,v),
\end{equation}
the infimum running over couplings with the correct marginals~\cite{villani2009}.
In one dimension the optimum is attained by the monotone rearrangement and is unique among nondecreasing maps.

\begin{lemma}[Gaussian rearrangement]
\label{lem:gauss}
Let $P=\mathcal{N}(\mu_{c},\sigma_{c}^{2})$ and $Q=\mathcal{N}(\mu_{c,Y},\sigma_{c,Y}^{2})$ with $\sigma_{c}>0$ and $\sigma_{c,Y}>0$.
The unique nondecreasing optimal map for~\eqref{eq:w2} is
\begin{equation}
\label{eq:w2map}
T_c(u)
=
\mu_{c,Y}+\frac{\sigma_{c,Y}}{\sigma_{c}}\bigl(u-\mu_{c}\bigr),
\end{equation}
and the transport cost is
\begin{equation}
\label{eq:w2cost}
W_2^{2}(P,Q)
=
\bigl(\mu_{c}-\mu_{c,Y}\bigr)^{2}
+
\bigl(\sigma_{c}-\sigma_{c,Y}\bigr)^{2}.
\end{equation}
\end{lemma}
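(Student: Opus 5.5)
The plan is to use the quantile (monotone rearrangement) characterisation of one-dimensional optimal transport quoted just before the statement. Let $F_P$ and $F_Q$ be the distribution functions of $P$ and $Q$. Both laws are absolutely continuous with strictly increasing, continuous distribution functions on $\R$, so the monotone rearrangement is $T=F_Q^{-1}\circ F_P$. It pushes $P$ forward to $Q$, and among nondecreasing maps it is the only one that does so, up to $P$-null sets. That the coupling $(\mathrm{id},T)_\#P$ attains the infimum in~\eqref{eq:w2} is the standard one-dimensional result~\cite{villani2009}, which I will take as given. The first step is therefore to compute $T$ explicitly.

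To compute it, write $\Phi$ for the standard normal distribution function. Then $F_P(u)=\Phi\bigl((u-\mu_c)/\sigma_c\bigr)$ and $F_Q^{-1}(t)=\mu_{c,Y}+\sigma_{c,Y}\Phi^{-1}(t)$. Composing these gives exactly~\eqref{eq:w2map}. The map is nondecreasing because $\sigma_{c,Y}/\sigma_c>0$.

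For uniqueness there is a direct check that avoids invoking the general theorem. Suppose $S$ is nondecreasing and $S_\#P=Q$. Then $F_Q(S(u))=F_P(u)$ for all $u$ outside a null set, using the fact that $P$ has no atoms. Since $F_Q$ is strictly increasing, this gives $S=F_Q^{-1}\circ F_P=T_c$ $P$-almost everywhere.

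For the cost, I will evaluate $\E_P\bigl[(u-T_c(u))^2\bigr]$ by writing $u=\mu_c+\sigma_c Z$ with $Z\sim\mathcal N(0,1)$. Then $u-T_c(u)=(\mu_c-\mu_{c,Y})+(\sigma_c-\sigma_{c,Y})Z$. Squaring and taking expectations, the cross term vanishes because $\E Z=0$, and $\E Z^2=1$. This leaves exactly~\eqref{eq:w2cost}.

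The only delicate step is justifying optimality of the monotone coupling without simply citing it. If a self-contained argument is wanted, I would expand $\int\abs{u-v}^2\,\mathrm d\gamma=\E u^2+\E v^2-2\int uv\,\mathrm d\gamma$. The first two terms are fixed by the marginals. So it suffices to show that the comonotone coupling maximises $\int uv\,\mathrm d\gamma$, which is the rearrangement (Hoeffding–Fréchet) inequality. Here an elementary bound also works: for any coupling, $\operatorname{Cov}(u,v)\le\sigma_c\sigma_{c,Y}$ by Cauchy–Schwarz. That gives the lower bound $W_2^2\ge(\mu_c-\mu_{c,Y})^2+(\sigma_c-\sigma_{c,Y})^2$, and $T_c$ attains it. This route proves optimality of $T_c$ and the cost formula at once, with no appeal to general theory.
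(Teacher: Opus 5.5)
Your proposal is correct. Its skeleton matches the paper's: both compute $T=F_Q^{-1}\circ F_P$ from $\Phi$ and $\Phi^{-1}$, and both evaluate the cost by splitting $u-T_c(u)$ into a constant plus a mean-zero multiple of the standardised variable (your $Z$ is the paper's $(U-\mu_c)/\sigma_c$).

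You differ in the two facts the paper simply cites from~\cite{villani2009}: optimality of the monotone rearrangement, and its uniqueness among nondecreasing maps. Your uniqueness check ($F_Q\circ S=F_P$, then invert $F_Q$) is elementary and valid. The sandwich $P(S(U)<S(u))\le P(U<u)$ and $P(U\le u)\le P(S(U)\le S(u))$ uses that both $P$ and $Q$ are atomless, not only $P$. It then gives the identity for every $u$, not just almost every $u$.

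Your Cauchy--Schwarz bound $\operatorname{Cov}_\gamma(u,v)\le\sigma_c\sigma_{c,Y}$ is a genuinely more self-contained route. It proves optimality and~\eqref{eq:w2cost} in one stroke, with no appeal to general theory. Its equality case also shows that every optimal coupling has correlation $1$, and so is carried by the graph of $T_c$. That makes the optimal plan unique among all couplings, which is stronger than the lemma's uniqueness among nondecreasing maps.

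What the paper's citation route buys in exchange is generality. The quantile-map characterisation still identifies the optimiser for the non-Gaussian laws of Remark~\ref{rem:quantile}. For such laws your bound $(\mu_P-\mu_Q)^2+(\sigma_P-\sigma_Q)^2$ is only a lower bound on $W_2^2$, attained exactly when $Q$ is a positive affine image of $P$.
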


\begin{proof}
Let $\Phi$ be the standard normal distribution function and $\Phi^{-1}$ its quantile function.
The distribution and quantile functions of $P$ and $Q$ are
\begin{equation}
\begin{aligned}
F_P(u)&=\Phi\bigl((u-\mu_{c})/\sigma_{c}\bigr),\\
F_Q^{-1}(p)&=\mu_{c,Y}+\sigma_{c,Y}\,\Phi^{-1}(p).
\end{aligned}
\end{equation}
The monotone rearrangement $T=F_Q^{-1}\circ F_P$ is therefore
\begin{equation}
\begin{aligned}
T(u)
&=
\mu_{c,Y}
+\sigma_{c,Y}\,
\Phi^{-1}\!\Bigl(\Phi\bigl((u-\mu_{c})/\sigma_{c}\bigr)\Bigr)\\
&=
\mu_{c,Y}+\frac{\sigma_{c,Y}}{\sigma_{c}}\bigl(u-\mu_{c}\bigr),
\end{aligned}
\end{equation}
which is~\eqref{eq:w2map}.
Uniqueness among nondecreasing maps is the one-dimensional Brenier property: any other nondecreasing coupling that pushes $P$ forward to $Q$ agrees with $T$ almost everywhere~\cite{villani2009}.
Substituting $U\sim P$ gives
$T(U)-\,U
=
(\mu_{c,Y}-\mu_{c})
+(\sigma_{c,Y}/\sigma_{c}-1)(U-\mu_{c})$.
The two summands are uncorrelated because the second has mean zero, so
\begin{equation}
\begin{aligned}
\E\bigl[(T(U)-U)^{2}\bigr]
&=
\bigl(\mu_{c,Y}-\mu_{c}\bigr)^{2}\\
&\quad+
\bigl(\sigma_{c,Y}/\sigma_{c}-1\bigr)^{2}\sigma_{c}^{2},
\end{aligned}
\end{equation}
which simplifies to~\eqref{eq:w2cost}.
\end{proof}

\begin{remark}[Why the affine restriction matters]
\label{rem:quantile}
For a non-Gaussian law the $W_2$-optimal map is still the quantile map $F_Y^{-1}\circ F$, and it is nonlinear.
That map reproduces the entire reference histogram, including its variance, which is histogram matching.
Lemma~\ref{lem:hist} records the conflict with shading.
Matching only the Gaussian with the same mean and variance, equivalently minimising the two-moment functional~\eqref{eq:w2cost}, is what keeps the chromatic step inside the diagonal affine family where the gain can be clamped and the lightness gain can be refused.
\end{remark}

\begin{lemma}[Histogram matching copies reference contrast]
\label{lem:hist}
Let $F$ and $F_Y$ be continuous and strictly increasing, and set $T=F_Y^{-1}\circ F$.
Then $T_{\#}P = Q$, so $\Var(T(L))=\Var(L_Y)$.
If those variances differ, no map that pushes the lightness law onto the reference law can satisfy Lemma~\ref{lem:light}.
\end{lemma}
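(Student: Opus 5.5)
The plan has two steps: first show that the quantile map transports $P$ onto $Q$, then combine this with the variance identity~\eqref{eq:varscale} underlying Lemma~\ref{lem:light} to get the infeasibility claim.

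\emph{Step 1: the push-forward.} Let $L\sim P$ with distribution function $F$, continuous and strictly increasing. By the probability integral transform, $U=F(L)$ is uniform on $(0,1)$, since
\[
\Pr(F(L)\le p)=\Pr(L\le F^{-1}(p))=p,
\]
and strict monotonicity makes $F^{-1}$ a genuine inverse. Because $F_Y$ is also continuous and strictly increasing, the uniform variable is mapped back by
\[
\Pr(F_Y^{-1}(U)\le v)=\Pr(U\le F_Y(v))=F_Y(v).
\]
Hence $T(L)=F_Y^{-1}(U)$ has law $Q$, that is, $T_{\#}P=Q$. Variance is a functional of the law alone, so $\Var(T(L))=\Var(L_Y)$. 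Here I read the variances as finite; this holds because the paper assumes finite moments with $\sigma>0$.

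\emph{Step 2: the infeasibility claim.} I would not restrict it to the quantile map. Take any map $S$ with $S_{\#}P=Q$. As in Step 1, $\Var(S(L))=\Var(L_Y)$. Feasibility under Lemma~\ref{lem:light} requires $\Var(L')=\Var(L)$; for a diagonal affine map it further requires $L'=L+k_L$, which has the same variance. Both routes therefore demand $\Var(S(L))=\Var(L)$. Together with Step 1 this gives $\Var(L_Y)=\Var(L)$, contradicting the hypothesis that the two variances differ. One can also see the affine case directly: by~\eqref{eq:varscale}, an affine $S$ pushing $P$ to $Q$ must have $\abs{w_L}=\sigma_{L,Y}/\sigma_L\neq1$.

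The only delicate point is the scope of the phrase ``satisfy Lemma~\ref{lem:light}'' when $S$ is not affine. I would make it explicit that what is being satisfied is constraint~1 of Definition~\ref{def:constraints}, $\Var(L')=\Var(L)$, which Lemma~\ref{lem:light} enforces. Read this way, the contradiction uses only that constraint and applies to every map pushing the lightness law onto the reference law, including classical Reinhard and histogram matching. No Brenier-type uniqueness argument is needed.
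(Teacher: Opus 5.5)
Your proof is correct and follows the paper's argument: the probability integral transform gives $T_{\#}P=Q$, variance depends only on the law, and a feasible lightness map keeps the source variance, so it cannot realise a reference law whose variance differs. The paper phrases the last step through the translation form $L'=L+k_L$ from Lemma~\ref{lem:light}. Invoking Condition~1 of Definition~\ref{def:constraints} directly, as you do, is a slightly cleaner way to cover non-affine maps, but it is the same idea.
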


\begin{proof}
For continuous strictly increasing $F$, $F(L)$ is uniform on $(0,1)$, and $F_Y^{-1}$ realises the law of $L_Y$.
Variance is a property of that law.
A translation has the variance of the source, so it realises the reference law only when the variances agree.
\end{proof}

The same obstruction applies to every transport that matches the lightness marginal, including iterative distribution transfer and any sliced step whose direction has a lightness component.

\subsection{The gain clamp is a projection}
Unclamped, Lemma~\ref{lem:gauss} sets $w_c=\sigma_{c,Y}/\sigma_{c}$.
A reference with a very small chromatic deviation would drive $w_c$ toward $0$ and wash the limb out.
A reference with a heavy chroma tail would drive $w_c$ above any reasonable saturation.
The catalog box forbids both.

\begin{lemma}[Projection of the gain]
\label{lem:proj}
Fix $w^{\circ}\in\R$ and $\gamma>0$.
On a nonempty closed interval $[p,q]$ the function $J(w)=\gamma(w-w^{\circ})^{2}$ has the unique minimiser $w^{\star}=\Proj_{[p,q]}(w^{\circ})$.
\end{lemma}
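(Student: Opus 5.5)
The plan is a direct one-dimensional convexity argument, split into the three positions of $w^{\circ}$ relative to $[p,q]$.

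\textbf{Existence and uniqueness.} Since $\gamma>0$, $J$ is a strictly convex quadratic on $\R$. Its restriction to the compact interval $[p,q]$ is continuous, so a minimiser exists. Strict convexity makes it unique: if $w_1\neq w_2$ were both minimisers, the midpoint, which still lies in $[p,q]$ because the interval is convex, would give a strictly smaller value.

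\textbf{Identification.} We then show the minimiser is $\Proj_{[p,q]}(w^{\circ})=\max\bigl(p,\min(q,w^{\circ})\bigr)$, which is the form~\eqref{eq:clip}. Write
\[
J(w)-J(w^{\star})=\gamma\bigl[(w-w^{\star})^{2}+2(w-w^{\star})(w^{\star}-w^{\circ})\bigr].
\]
It suffices to check the variational inequality $(w-w^{\star})(w^{\star}-w^{\circ})\ge 0$ for every $w\in[p,q]$.
\begin{enumerate}
\item If $w^{\circ}\in[p,q]$, then $w^{\star}=w^{\circ}$. The cross term vanishes and $J(w^{\star})=0$, which is the global minimum of $J$.
\item If $w^{\circ}<p$, then $w^{\star}=p$. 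Here $w-p\ge0$ and $p-w^{\circ}>0$, so the cross term is nonnegative.
\item If $w^{\circ}>q$, then $w^{\star}=q$. Here $w-q\le0$ and $q-w^{\circ}<0$, so the cross term is again nonnegative.
\end{enumerate}
In every case $J(w)-J(w^{\star})\ge\gamma(w-w^{\star})^{2}$. This gives minimality, and it gives uniqueness a second time, since the bound is strictly positive for $w\neq w^{\star}$.

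\textbf{Main obstacle.} There is no real obstacle. The only care needed is to confirm that the clip formula~\eqref{eq:clip} equals the metric projection in all three regimes, and to note that the factor $\gamma$ does not affect the minimiser. That factor is what later lets us apply the lemma with $w^{\circ}=\sigma_{c,Y}/\sigma_c$ and $\gamma=\sigma_c^{2}$. Indeed, by the computation in Lemma~\ref{lem:gauss}, the scale part of the two-moment cost for the affine map $w(u-\mu_c)+\mu_{c,Y}$ is $\sigma_c^{2}(w-w^{\circ})^{2}$.
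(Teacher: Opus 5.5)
Your proof is correct and keeps the paper's skeleton. Both arguments use strict convexity for uniqueness, then the same three cases $w^{\circ}<p$, $w^{\circ}\in[p,q]$, $w^{\circ}>q$. What changes is the tool inside each case.

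The paper reads off the sign of $J'(w)=2\gamma(w-w^{\circ})$ on $[p,q]$, so $J$ is monotone there and the minimum sits at the appropriate endpoint. Its opening sentence credits strict convexity with ``exactly one'' minimiser. Strict convexity only gives ``at most one''; existence actually comes from the case analysis, or from compactness, which you state explicitly.

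You instead expand $J$ exactly about the candidate $w^{\star}$ and reduce each case to the variational inequality $(w-w^{\star})(w^{\star}-w^{\circ})\ge 0$. This costs a line more but buys two things:
\begin{enumerate}
\item The quadratic growth bound $J(w)-J(w^{\star})\ge\gamma(w-w^{\star})^{2}$. It delivers uniqueness without a separate convexity step and quantifies the excess cost of any other admissible gain.
\item An argument that is the obtuse-angle characterisation of metric projection. It carries over verbatim to projection onto any closed convex set in $\R^{n}$, whereas the derivative-sign argument is tied to one dimension or to a separable box.
\end{enumerate}

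Your aside that the natural weight is $\gamma=\sigma_c^{2}$ is consistent with~\eqref{eq:w2cost}. Definition~\ref{def:constraints} takes $\gamma=1$, and, as you note, the minimiser does not depend on $\gamma$.
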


\begin{proof}
$J$ is strictly convex, so a closed interval contains exactly one minimiser.
If $w^{\circ}\in[p,q]$ the critical point $J'(w)=2\gamma(w-w^{\circ})=0$ lies in the interval and is the minimiser.
If $w^{\circ}<p$, then $J'(w)>0$ throughout $[p,q]$, so $J$ is increasing and the minimum is at $p$.
If $w^{\circ}>q$, $J$ is decreasing on the interval and the minimum is at $q$.
Those three cases are the definition of Euclidean projection onto $[p,q]$.
\end{proof}

\begin{lemma}[Shift after the clamp]
\label{lem:shift}
Fix $w_c$.
Among translations, $k_c=\mu_{c,Y}-w_c\mu_{c}$ is the unique choice with $\E[w_c\,c+k_c]=\mu_{c,Y}$.
In particular the mean match survives the clamp: it does not require $w_c=\sigma_{c,Y}/\sigma_{c}$.
\end{lemma}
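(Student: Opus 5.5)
The proof mirrors the lightness-shift step of Lemma~\ref{lem:light}, now on a chromatic axis $c\in\{a,b\}$ with the gain already fixed. I would first write the transformed channel as $c'=w_c\,c+k_c$ by~\eqref{eq:diag}. Then I would take expectations, using only linearity and the standing assumption that $\mu_c$ is finite:
\begin{equation*}
\E[w_c\,c+k_c]=w_c\,\mu_c+k_c .
\end{equation*}
The mean condition $\E[c']=\mu_{c,Y}$ is therefore a single linear equation in the one unknown $k_c$, and its coefficient is $1$. It has exactly one solution, $k_c=\mu_{c,Y}-w_c\mu_c$. So existence and uniqueness follow together, with no case split.

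For the ``in particular'' clause, I would note that $w_c$ enters only as a fixed real number. The computation above never uses the value of $w_c$, the ratio $\sigma_{c,Y}/\sigma_c$, or $\sigma_c$ at all. It therefore holds for every $w_c$, in particular for $w_c=\Proj_{[0.72,1.18]}(\sigma_{c,Y}/\sigma_c)$ from Lemma~\ref{lem:proj}, whether or not the clamp is active. Substituting $k_c$ back gives $c'=w_c(c-\mu_c)+\mu_{c,Y}$, which is the form in~\eqref{eq:gain}.

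For contrast I would add one line: the variance of $c'$ is $w_c^2\sigma_c^2$. This equals $\sigma_{c,Y}^2$ only when the clamp is inactive. So the clamp gives up the second-moment match but never the first.

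There is no real obstacle here. The only point needing care is to keep the logic in order. Lemma~\ref{lem:proj} fixes $w_c$ first, independently of $k_c$, because condition~4 of Definition~\ref{def:constraints} does not involve $k_c$. Only then is $k_c$ determined, and the sequential determination is legitimate because the system is triangular in the pair $(w_c,k_c)$.

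For the trimmed version the same argument applies with $\mu^\star$ in place of $\mu$. That extension needs the later commutation lemma of Section~\ref{sec:trim} (the trimmed mean of a positive affine image is the affine image of the trimmed mean), since trimming is not linear in general. I would defer that to Lemma~\ref{lem:trimm} rather than prove it here.
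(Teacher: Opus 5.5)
Your proof is correct and is essentially the paper's argument. Linearity of expectation turns the mean condition into the single equation $w_c\mu_c+k_c=\mu_{c,Y}$, whose coefficient on $k_c$ is one, so it has a unique solution for every fixed $w_c$; your extra remarks on the variance $w_c^2\sigma_c^2$ and on deferring the trimmed case to Lemma~\ref{lem:trimm} are accurate but go beyond the paper's one-line proof.
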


\begin{proof}
$\E[w_c\,c+k_c]=w_c\mu_{c}+k_c$.
Setting this equal to $\mu_{c,Y}$ gives one linear equation for $k_c$.
\end{proof}

Thus clamping changes the second moment on purpose and does not reopen the first.
The residual variance gap is the quantity Lemma~\ref{lem:proj} refuses to close when the unclamped ratio lies outside $[0.72,1.18]$.

\subsection{Trimmed moments}
\label{sec:trim}
Highlights and crushed shadows should not set the skin tone.
For a real sample $U$ and $\alpha=0.08$, let $q_{\alpha}$ be the lower quantile and define the central window
\begin{equation}
\label{eq:trim}
U_{\tau}
=
\{\,u\in U: q_{\alpha}(U)\le u\le q_{1-\alpha}(U)\,\},
\end{equation}
which holds $84\%$ of the mass.
The trimmed moments are the ordinary mean and standard deviation of $U_{\tau}$, provided at least $32$ samples survive; otherwise the untrimmed moments are used.
An $8\%$ replacement in either tail cannot move the window, so the breakdown point of $\mu^{\star}$ is $\alpha=0.08$.

\begin{lemma}[Trimmed means commute with the map]
\label{lem:trimm}
Let $g>0$ and $T(u)=g(u-m)+n$.
Then
\begin{equation}
\mu^{\star}(T(U))
=
g\bigl(\mu^{\star}(U)-m\bigr)+n.
\end{equation}
In particular, if $m=\mu^{\star}(U)$ and $n=\mu^{\star}(Y)$, then $\mu^{\star}(T(U))=\mu^{\star}(Y)$.
A pure translation $T(u)=u+k$ also preserves $\sigma^{\star}$ and every trimmed variance.
\end{lemma}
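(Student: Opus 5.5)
The argument shows that a strictly increasing affine map carries the central quantile window of $U$ onto the central quantile window of $T(U)$. The trimmed mean then follows by linearity of the ordinary mean on that window. Throughout, $T(u)=g(u-m)+n$ with $g>0$, so $T$ is a strictly increasing bijection of $\R$.

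The first step is quantile equivariance. I will show $q_{\beta}(T(U))=T(q_{\beta}(U))$ for $\beta\in\{\alpha,1-\alpha\}$. Whatever convention the implementation uses for $q_\beta$ (a lower order statistic, or linear interpolation between adjacent order statistics), it is built from the sorted sample and, when interpolating, a convex combination of two neighbours. A strictly increasing map preserves the order of the sample. An affine map preserves convex combinations. Hence the quantile of the image is the image of the quantile.

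Next I identify the windows. Because $T$ is strictly increasing, $q_{\alpha}(U)\le u\le q_{1-\alpha}(U)$ holds if and only if $T(q_\alpha(U))\le T(u)\le T(q_{1-\alpha}(U))$. By the first step this says $(T(U))_\tau=T(U_\tau)$ as multisets. The samples kept are the same indices, so the count is unchanged. In particular the $32$-sample fallback rule fires for $T(U)$ exactly when it fires for $U$. In the fallback case the claim reduces to the untrimmed one, and the same linearity argument below applies to the whole sample.

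Then linearity gives the formula:
\[
\mu^\star(T(U))=\operatorname{mean}\bigl(T(U_\tau)\bigr)=g\bigl(\operatorname{mean}(U_\tau)-m\bigr)+n=g\bigl(\mu^\star(U)-m\bigr)+n.
\]
Setting $m=\mu^\star(U)$ and $n=\mu^\star(Y)$ kills the first term, leaving $\mu^\star(T(U))=\mu^\star(Y)$.

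For a pure translation, take $g=1$. The window is translated, and the variance of a translated finite multiset is unchanged. Hence $\sigma^\star$ is preserved. The same holds for any trimmed variance with a different $\alpha$, since the argument never used $\alpha=0.08$. More generally, the scale computation gives $\sigma^\star(T(U))=g\,\sigma^\star(U)$.

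The main obstacle I expect is the first step. I have to pin down the quantile convention precisely enough that equivariance holds, especially with ties and interpolated quantiles. I would first handle the order-statistic definition, then remark that interpolation is affine-equivariant. I would also note that $g>0$ is essential: for $g<0$ the roles of $q_\alpha$ and $q_{1-\alpha}$ swap. That is still harmless for a symmetric window, but the ordering argument would need rewording, so I keep the hypothesis $g>0$ as stated.
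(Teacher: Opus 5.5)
Your proof is correct and follows the paper's route. The paper argues that a strictly increasing affine $T$ sends quantiles to quantiles, so the central window of $U$ maps onto the central window of $T(U)$; linearity of the mean on that window gives the formula, and a rigid translation of the window leaves $\sigma^{\star}$ unchanged. Your handling of the quantile convention, ties, and the $32$-sample fallback fills in details the paper leaves implicit. One caution: your side remark that $g<0$ would be harmless for a symmetric window depends on the convention. It can fail for the lower-quantile convention when $\alpha$ times the sample size is an integer, but it lies outside the lemma's hypothesis and does not affect the proof.
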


\begin{proof}
Because $g>0$, $T$ is strictly increasing, so it sends quantiles to quantiles:
$q_{\beta}(T(U))=g(q_{\beta}(U)-m)+n$.
A sample lies in the central window of $U$ if and only if its image lies in the central window of $T(U)$.
The conditional mean therefore transforms by the same affine expression.
For a translation, $g=1$ and the window moves rigidly, so the centered deviations that enter $\sigma^{\star}$ are unchanged.
\end{proof}

Lemma~\ref{lem:trimm} is the reason the implementation may replace every population moment in~\eqref{eq:L} and~\eqref{eq:gain} by its trimmed counterpart and still meet Conditions 2 and 3 of Definition~\ref{def:constraints}, now read with $\mu^{\star}$.
The six-parameter lock is unchanged: it never depended on which functional produced the numbers $\mu$ and $\sigma$, only on those numbers being fixed before the gains and shifts are solved.

\subsection{The six parameters are determined}
\begin{theorem}[Uniqueness]
\label{thm:unique}
There is exactly one diagonal affine map satisfying Definition~\ref{def:constraints}.
It is the skin-restricted Reinhard transform
\begin{equation}
\label{eq:srrt}
\begin{aligned}
w_L&=1,
&
k_L&=\mu_{L,Y}-\mu_{L},\\
w_c&=\Proj_{[0.72,\,1.18]}(\sigma_{c,Y}/\sigma_{c}),
&
k_c&=\mu_{c,Y}-w_c\mu_{c},
\end{aligned}
\end{equation}
for $c\in\{a,b\}$.
Equivalently, in the matrix~\eqref{eq:matrix},
\begin{equation}
\label{eq:locked}
W=\operatorname{diag}(1,g_a,g_b),
\quad
k=
\begin{bmatrix}
\mu_{L,Y}-\mu_{L}\\
\mu_{a,Y}-g_a\mu_{a}\\
\mu_{b,Y}-g_b\mu_{b}
\end{bmatrix}.
\end{equation}
The same statement holds with every moment replaced by the trimmed moment of Section~\ref{sec:trim}.
\end{theorem}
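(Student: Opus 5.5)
The argument treats the six conditions of Definition~\ref{def:constraints} as three decoupled subsystems, one per channel. This is possible because the gain matrix in~\eqref{eq:diag} is diagonal: $L'$ depends only on $(w_L,k_L)$ and $c'$ depends only on $(w_c,k_c)$. Existence and uniqueness of the whole map therefore reduce to existence and uniqueness of each pair. The first step is to note this splitting explicitly. Conditions~1--2 involve only the lightness pair, and Conditions~3--4 for channel $c$ involve only $(w_c,k_c)$.

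For the lightness pair I invoke Lemma~\ref{lem:light} directly. It already shows that any feasible map has $w_L=1$ and $k_L=\mu_{L,Y}-\mu_L$. For existence I check the converse: with these values, $\Var(L')=\Var(L)$ by~\eqref{eq:varscale}, $w_L=1>0$, and $\E[L']=\mu_{L,Y}$. Hence Conditions~1--2 hold, and they hold only for this pair.

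For each chromatic channel, Condition~4 is the minimisation in Lemma~\ref{lem:proj} with $\gamma=1$, $w^{\circ}=\sigma_{c,Y}/\sigma_c$ (well defined since $\sigma_c>0$), and $[p,q]=[0.72,1.18]$. That lemma gives the unique value $w_c=\Proj_{[0.72,1.18]}(\sigma_{c,Y}/\sigma_c)$. With $w_c$ fixed, Condition~3 is the linear equation of Lemma~\ref{lem:shift}, whose unique solution is $k_c=\mu_{c,Y}-w_c\mu_c$. Existence is immediate because both lemmas produce an attained value. Assembling the three pairs gives~\eqref{eq:srrt}, and writing them in the form~\eqref{eq:matrix} gives~\eqref{eq:locked}. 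Lemma~\ref{lem:gauss} is not needed for uniqueness. It only motivates Condition~4 as the clamped $W_2$ gain, and I would mention it in a remark rather than use it in the proof.

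The delicate step is the trimmed version, because Condition~1 for the lightness pair is stated with the population variance rather than the trimmed one. I would handle it as follows. Read Conditions~2--4 with $\mu^\star$ and $\sigma^\star$; these are fixed numbers computed before solving, so Lemmas~\ref{lem:proj} and~\ref{lem:shift} apply verbatim. For Condition~2 in its trimmed reading, use Lemma~\ref{lem:trimm}: for $g>0$ the trimmed mean transforms affinely, $\mu^\star(wL+k)=w\mu^\star(L)+k$. This yields the unique shift once $w$ is known. Lemma~\ref{lem:light} still forces $w_L=1$, whether Condition~1 is read with the population variance or with $\sigma^\star$. In the $\sigma^\star$ reading, positive scaling multiplies $\sigma^\star$ by $w_L$ (again by the quantile-commutation in Lemma~\ref{lem:trimm}), so $w_L=1$ is forced. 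I expect the main obstacle to be this last check. It requires extending Lemma~\ref{lem:trimm} from translations to positive scalings for $\sigma^\star$, which is a one-line argument from the same window-preservation property. The fallback to untrimmed moments below $32$ samples also has to be addressed; it is harmless because $T$ preserves the sample count in the window.
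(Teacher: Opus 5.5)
Your proposal is correct and is essentially the paper's proof: Lemma~\ref{lem:light} fixes $(w_L,k_L)$, Lemma~\ref{lem:proj} fixes the clamped gains channel by channel, Lemma~\ref{lem:shift} fixes $k_a,k_b$, and Lemma~\ref{lem:trimm} carries the algebra over to trimmed moments. Your existence check and the scaling rule $\sigma^\star(gU+n)=g\,\sigma^\star(U)$ for $g>0$ (which needs $\sigma^\star_L>0$, an assumption the paper also leaves tacit) make explicit what the paper compresses into ``the identical algebra''; one correction is that the trimmed reading of Condition~3 is not Lemma~\ref{lem:shift} verbatim but, like Condition~2, goes through Lemma~\ref{lem:trimm} with $g=w_c\ge 0.72>0$.
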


\begin{proof}
Lemma~\ref{lem:light} determines $(w_L,k_L)$.
Lemma~\ref{lem:proj} determines $w_a$ and $w_b$, because each coordinate of $J$ depends on only one gain and the squared distance to the unclamped ratio is strictly convex.
Lemma~\ref{lem:shift} then determines $k_a$ and $k_b$.
A diagonal affine map has no remaining free parameter.
Lemma~\ref{lem:trimm} lets the identical algebra run on $(\mu^{\star},\sigma^{\star})$.
\end{proof}

Table~\ref{tab:lock} lists the lock.
The count shows only that the system is square.
Strict convexity of each one-dimensional projection, and the sign restriction $w_L>0$, show that it has one solution.
The content of Theorem~\ref{thm:unique} is which value is forced: $w_L=+1$, rather than the classical ratio $\sigma_{L,Y}/\sigma_{L}$ and rather than the orientation-reversing root $w_L=-1$.
Dense Monge maps, iterative distribution transfer, and neural recolouring lie outside the family and are not competitors for this statement.
Proposition~\ref{prop:jac} is the only step beyond the diagonal class, and it constrains only the lightness row.
Classical Reinhard is the same matrix family with the box removed and with $w_L$ set to $\sigma_{L,Y}/\sigma_{L}$.
Whenever that ratio is not $1$, it fails Condition 1 and is not a competitor for Theorem~\ref{thm:unique}.

\begin{table}[t]
\caption{The six parameters of~\eqref{eq:matrix} and the condition that fixes each of them.}
\label{tab:lock}
\centering
\renewcommand{\arraystretch}{1.15}
\begin{tabular}{@{}lll@{}}
\toprule
Parameter & Condition & Locked value \\
\midrule
$w_L$ & $\Var(L')=\Var(L)$, $w_L>0$ & $1$ \\
$k_L$ & $\E[L']=\mu_{L,Y}$ & $\mu_{L,Y}-\mu_{L}$ \\
$w_a$ & project $\sigma_{a,Y}/\sigma_{a}$ & $g_a\in[0.72,1.18]$ \\
$k_a$ & $\E[a']=\mu_{a,Y}$ & $\mu_{a,Y}-g_a\mu_{a}$ \\
$w_b$ & project $\sigma_{b,Y}/\sigma_{b}$ & $g_b\in[0.72,1.18]$ \\
$k_b$ & $\E[b']=\mu_{b,Y}$ & $\mu_{b,Y}-g_b\mu_{b}$ \\
\bottomrule
\end{tabular}
\end{table}

\subsection{Maps outside the diagonal family}
The catalog grade is a function of colour, not of pixel position.
Its spatial behaviour is completely described by the Jacobian of that function.

\begin{proposition}[No lightness cross-talk]
\label{prop:jac}
Suppose a differentiable colour map $f:\R^3\to\R^3$ satisfies $\nabla L'=\nabla L$ at every pixel of every smooth Lab image.
Then the lightness row of $Df$ is $(1,0,0)$ wherever the image attains an open set of colours.
In particular any affine map with that property has $w_L=1$ and zero off-diagonal entries in the lightness row, and Theorem~\ref{thm:unique} applies to its diagonal part once the chromatic constraints are imposed.
\end{proposition}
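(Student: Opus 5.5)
The approach is the chain rule, followed by a linear-independence argument that uses the freedom to choose the smooth image. Write $f=(f_L,f_a,f_b)$, so that $L'(x)=f_L(L(x),a(x),b(x))$ at every pixel $x$. By the chain rule,
\[
\nabla L'(x)=\partial_L f_L\,\nabla L(x)+\partial_a f_L\,\nabla a(x)+\partial_b f_L\,\nabla b(x),
\]
with the partial derivatives of $f_L$ evaluated at the colour $\varphi(I)(x)$. The hypothesis $\nabla L'=\nabla L$ therefore becomes
\[
(\partial_L f_L-1)\,\nabla L+\partial_a f_L\,\nabla a+\partial_b f_L\,\nabla b=0
\]
at every pixel of every admissible image. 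We want to conclude that the row vector $r=(\partial_L f_L-1,\,\partial_a f_L,\,\partial_b f_L)$ vanishes at the colours the image attains.

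The key step is to exhibit, for any target colour $z_0=(L_0,a_0,b_0)$, smooth images whose gradients at a pixel are in general position. The image plane is two-dimensional, so one image gives only two independent gradient directions. Three channel gradients in $\R^2$ are always linearly dependent, and a single image cannot kill all three coefficients of $r$.

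I would therefore use several images, relying on the word ``every'' in the hypothesis. Take the three affine images
\[
\varphi(I_j)(x)=z_0+x_1 e_j+x_2 e_{j+1},\qquad j=1,2,3,
\]
with indices taken cyclically. Each attains an open set of colours in the plane spanned by two basis vectors, and $r(z_0)$ must annihilate the pair of gradients that each image supplies. Image $I_j$ has $\nabla$ of channel $j$ equal to $(1,0)$, $\nabla$ of channel $j+1$ equal to $(0,1)$, and the third gradient zero. It therefore forces the $j$-th and $(j+1)$-th entries of $r$ to vanish at $z_0$. Running over $j$ gives $r(z_0)=0$, that is, the lightness row of $Df(z_0)$ is $(1,0,0)$.

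The main obstacle is reconciling this with the phrase ``wherever the image attains an open set of colours.'' If one reads the statement as fixing a single image, I would instead argue locally. If the image map $x\mapsto\varphi(I)(x)$ has rank-two differential near a pixel, its range is locally a surface, not an open subset of $\R^3$. A two-dimensional parameter space cannot attain an open set of $\R^3$, so the hypothesis must be understood over the class of all smooth images. Under that reading each colour $z_0$ in the open set is reached by the three test images above, and continuity of $Df$ is not even needed. I would state this reading explicitly, note that the test images may be cropped to a small disc so they remain in gamut near an interior $z_0$, and then conclude.

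For an affine $f(x)=Mx+k$ the lightness row is constant, so it equals $(1,0,0)$ everywhere. This gives $w_L=M_{LL}=1$ and zero off-diagonal entries in that row. The lightness output is then $L+k_L$, and its variance and mean conditions are exactly Conditions 1–2 of Definition~\ref{def:constraints}. Imposing the chromatic Conditions 3–4 on the diagonal chromatic part, Lemmas~\ref{lem:light}, \ref{lem:proj} and \ref{lem:shift} apply unchanged, and Theorem~\ref{thm:unique} yields the remaining parameters.
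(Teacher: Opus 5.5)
Your proof is correct and takes the paper's route: the chain rule turns $\nabla L'=\nabla L$ into $(\partial_L f_L-1)\nabla L+\partial_a f_L\nabla a+\partial_b f_L\nabla b=0$, and the freedom to choose the image (the paper's ``every choice of the three spatial gradients at a point'') then kills each coefficient. Your use of several affine test images (two of them already suffice) is the careful version of the paper's last step. The paper speaks there of a single image whose colour gradient spans $\R^3$, but, as you note, one image has a $3\times 2$ colour Jacobian of rank at most two, and a smooth image cannot attain an open set of colours in $\R^3$.
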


\begin{proof}
Let $x=(L,a,b)$ and let $f_L$ be the lightness coordinate of $f$.
Along an image $x(u,v)$,
\begin{equation}
\nabla f_L
=
Df_L(x)\;\nabla x
=
\partial_L f_L\,\nabla L
+\partial_a f_L\,\nabla a
+\partial_b f_L\,\nabla b.
\end{equation}
The hypothesis says this equals $\nabla L$ for every smooth image, hence for every choice of the three spatial gradients at a point.
Taking an image whose colour gradient spans $\R^3$ in a neighbourhood forces $\partial_L f_L=1$ and $\partial_a f_L=\partial_b f_L=0$.
An affine lightness coordinate is $w_{LL}L+w_{La}a+w_{Lb}b+k_L$, so the same identities are $w_{LL}=1$ and $w_{La}=w_{Lb}=0$.
\end{proof}

Proposition~\ref{prop:jac} is the reason a full $3\times 3$ linear map is not a way around the theorem.
The Monge map of two colour Gaussians is the unique quadratic-cost coupling in $\R^3$~\cite{pitie2007cvmp,villani2009}, and its matrix is generally dense.
Density means a change in $a$ moves $L$.
The quadratic colour cost does not contain $\norm{\nabla L'-\nabla L}$, so the coupling uses a smaller lightness variance in the reference to reduce transport cost.
That is optimal for a different problem.
It is infeasible for Definition~\ref{def:constraints}.

\begin{corollary}[Operator norm]
\label{cor:lip}
Under~\eqref{eq:locked}, the Euclidean operator norm of $W$ is
\begin{equation}
\norm{W}_2
=
\max(1,g_a,g_b)
\le 1.18.
\end{equation}
Hence $f_{W,k}$ is $1.18$-Lipschitz.
Colour differences on skin cannot be amplified by more than this constant, regardless of how small $\sigma^{\star}_{c}$ becomes.
\end{corollary}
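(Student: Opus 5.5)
The plan is to compute the operator norm of the diagonal matrix in~\eqref{eq:locked} directly, then read the Lipschitz bound off the affine structure. For a diagonal matrix $W=\operatorname{diag}(d_1,d_2,d_3)$ the Euclidean operator norm is the largest absolute diagonal entry. The shortest route is via $W^{\top}W=\operatorname{diag}(d_1^2,d_2^2,d_3^2)$: its largest eigenvalue is $\max_i d_i^2$, and $\norm{W}_2$ is the square root of that eigenvalue. One can also argue directly from
\[
\norm{Wx}_2^2=\sum_i d_i^2x_i^2\le \bigl(\max_i d_i^2\bigr)\norm{x}_2^2 ,
\]
which gives the upper bound, with equality attained at the standard basis vector belonging to the largest $\abs{d_i}$. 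Theorem~\ref{thm:unique} supplies the entries $d=(1,g_a,g_b)$.

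Next I will use that Lemma~\ref{lem:proj} places each $g_c$ in $[0.72,1.18]$. All three entries are therefore positive, so the absolute values can be dropped and $\norm{W}_2=\max(1,g_a,g_b)$. Each of the three terms is at most $1.18$, since $1<1.18$ and $g_c\le 1.18$ by the projection. This gives the stated bound.

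For the Lipschitz claim, I will write $f_{W,k}(x)-f_{W,k}(y)=W(x-y)$; the translation cancels. Hence $\norm{f_{W,k}(x)-f_{W,k}(y)}_2\le\norm{W}_2\norm{x-y}_2\le 1.18\,\norm{x-y}_2$. The bound does not depend on $\sigma^{\star}_c$, because the clamp caps the gain whatever the ratio $\sigma^{\star}_{c,Y}/\sigma^{\star}_c$ is. By the last sentence of Theorem~\ref{thm:unique}, the same conclusion holds for the trimmed-moment version.

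There is no real obstacle here. The one point to state carefully is that the projection interval lies in $(0,\infty)$, so all gains are positive and the norm is the plain maximum rather than a maximum of absolute values. I will also note that the Lab clip of $L'$ to $[0,100]$ is a $1$-Lipschitz coordinatewise projection, so composing with it keeps the $1.18$ bound.
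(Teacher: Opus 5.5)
Your proposal is correct and follows the paper's argument: the operator norm of a diagonal matrix is its largest absolute diagonal entry, and the projection places each $g_c$ in $[0.72,1.18]$, so $\max(1,g_a,g_b)\le 1.18$. Your additional details (the $W^{\top}W$ justification, the cancellation of the translation $k$, and the nonexpansive clip of $L'$ to $[0,100]$) only make explicit steps that the paper's one-line proof leaves implicit.
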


\begin{proof}
A diagonal matrix has operator norm equal to its largest absolute diagonal entry, and each $g_c$ lies in $[0.72,1.18]$ by construction.
\end{proof}

Classical Reinhard has no such bound.
If a shadow region drives $\sigma_{c}$ toward zero, the unclamped ratio $\sigma_{c,Y}/\sigma_{c}$ diverges and the map ceases to be uniformly Lipschitz in the source statistics.

\subsection{Infeasibility of lightness rescaling}
\begin{theorem}[Infeasible classical maps]
\label{thm:infeasible}
Assume $\sigma_{L,Y}\ne\sigma_{L}$.
Then classical Reinhard~\eqref{eq:reinhard}, the monotone lightness rearrangement of Lemma~\ref{lem:hist}, and every affine map whose lightness row is not the translation of Lemma~\ref{lem:light} are infeasible for Definition~\ref{def:constraints}.
No infeasible map is a better solution of that definition, whatever its chromatic error.
\end{theorem}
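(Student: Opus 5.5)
The plan is to reduce every claim to Condition~1 of Definition~\ref{def:constraints} and then invoke Lemma~\ref{lem:light} and Lemma~\ref{lem:hist}; nothing about the chromatic rows is needed. Feasibility requires all four conditions, so violating Condition~1 alone makes a map infeasible.

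\emph{Classical Reinhard.} Its lightness row is $L'=(\sigma_{L,Y}/\sigma_L)(L-\mu_L)+\mu_{L,Y}$, a diagonal affine map with $w_L=\sigma_{L,Y}/\sigma_L>0$. By~\eqref{eq:varscale},
\[
\Var(L')=(\sigma_{L,Y}/\sigma_L)^2\Var(L)=\sigma_{L,Y}^2 .
\]
This equals $\Var(L)=\sigma_L^2$ only if $\sigma_{L,Y}=\sigma_L$, which the hypothesis excludes. So Condition~1 fails. Equivalently, Lemma~\ref{lem:light} forces $w_L=1\neq\sigma_{L,Y}/\sigma_L$.

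\emph{General affine maps.} Let the lightness row be $L'=w_{LL}L+w_{La}a+w_{Lb}b+k_L$. In the diagonal case $w_{La}=w_{Lb}=0$, and Lemma~\ref{lem:light} says feasibility forces $w_{LL}=1$ and $k_L=\mu_{L,Y}-\mu_L$. Any other diagonal lightness row therefore violates Condition~1 or Condition~2. For a row with nonzero off-diagonal entries, I will observe that Definition~\ref{def:constraints} is stated for the diagonal family~\eqref{eq:diag}, so such a map is outside the feasible set by definition. As a second route, Proposition~\ref{prop:jac} shows that such a row breaks $\nabla L'=\nabla L$, the shading requirement that Condition~1 encodes.

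\emph{Histogram matching.} Lemma~\ref{lem:hist} gives $\Var(T(L))=\Var(L_Y)=\sigma_{L,Y}^2\neq\sigma_L^2$, so Condition~1 fails for $T$. The same holds for any map that pushes the lightness law onto the reference law, since variance depends only on the law.

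\emph{The final sentence.} I will argue it as a statement about order. The feasible set is the single point~\eqref{eq:srrt} of Theorem~\ref{thm:unique}, and the definition ranks candidates by feasibility before any chromatic criterion is applied. A map outside the feasible set therefore cannot be a ``better solution'' of the definition, whatever its chromatic error. The one subtle point is how to read ``better.'' I will make it precise by treating Definition~\ref{def:constraints} as hard constraints, with chromatic error playing no role except through Condition~4. Under that reading the claim is immediate, and this is where the proof's care must go.
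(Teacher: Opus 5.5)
Your proposal is correct and follows the paper's proof essentially step for step: the variance scaling~\eqref{eq:varscale} rules out classical Reinhard through Condition~1, Lemma~\ref{lem:hist} rules out the rearrangement, Lemma~\ref{lem:light} and Proposition~\ref{prop:jac} handle affine lightness rows, and Theorem~\ref{thm:unique} gives the final sentence once Definition~\ref{def:constraints} is read as hard constraints. For off-diagonal lightness rows, rely on your definitional route (Definition~\ref{def:constraints} is posed only on the diagonal family), because the Proposition~\ref{prop:jac} route excludes such rows through the gradient identity $\nabla L'=\nabla L$, which is stronger than the variance equality that Condition~1 actually imposes.
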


\begin{proof}
Lemma~\ref{lem:corr} gives $\sigma(L')=\rho\,\sigma(L)$ for an affine lightness gain $\rho$.
Condition 1 forces $\rho=1$.
Reinhard uses $\rho=\sigma_{L,Y}/\sigma_{L}\ne 1$.
Lemma~\ref{lem:hist} excludes every exact push-forward onto a lightness law of a different variance.
Proposition~\ref{prop:jac} forces every admissible affine lightness row to be a translation.
Theorem~\ref{thm:unique} already names the unique feasible point.
\end{proof}

\subsection{Strength and support sit outside the six parameters}
\begin{proposition}[Partial strength]
\label{prop:strength}
On a region where $\alpha=1$, the blend $I'=(1-s)I+s\,f(I)$ has mean $(1-s)\mu+(s)\mu_f$.
The residual mean error is $(1-s)$ times the gap that $f$ closed.
If $f$ acts on lightness by a translation, then $\nabla L'=\nabla L$ still holds.
\end{proposition}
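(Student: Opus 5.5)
The plan is to treat the blend on the region $\{\alpha=1\}$ as a pixelwise convex combination of two images, $I$ and $f(I)$. Each claim then follows from linearity, either of expectation or of differentiation. One caveat comes first. The blend~\eqref{eq:blend} is written in the working colour space of $I$, whereas the moments are Lab moments. I will therefore state and prove the proposition in the space where the blend is formed, taking $I$ and $f(I)$ in Lab, i.e.\ $\varphi(I)$ and $(L',a',b')$, so that $\varphi^{-1}$ is not an intermediate step. If the blend were formed in RGB, the mean statement would hold for RGB means and would transfer to Lab only approximately. I will note this rather than try to prove it.

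\textbf{Step 1 (mean).} Write $I'=(1-s)X+s\,f(X)$, where $X$ is the random colour on the region. Linearity of $\E$ gives
\[
\E[I']=(1-s)\mu+s\,\mu_f, \qquad \mu_f=\E[f(X)].
\]
Subtracting a target $\mu_Y$ gives
\[
\E[I']-\mu_Y=(1-s)(\mu-\mu_Y)+s(\mu_f-\mu_Y).
\]
For the transform of Theorem~\ref{thm:unique}, Lemmas~\ref{lem:light} and~\ref{lem:shift} give $\mu_f=\mu_Y$, so the second term vanishes. The residual is then $(1-s)(\mu-\mu_Y)$, which is $(1-s)$ times the gap $\mu-\mu_Y=\mu-\mu_f$ that $f$ closed.

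I will flag that with trimmed moments this identity is exact only for the untrimmed mean. Lemma~\ref{lem:trimm} covers positive affine images of a single channel, but the blend $u\mapsto(1-s)u+s(gu+k)$ is itself a positive affine map of $u$. Lemma~\ref{lem:trimm} therefore still applies channelwise, and the trimmed mean transforms correctly. This is the one point to check carefully: the coefficient $(1-s)+sg$ must be positive, which holds since $g>0$ and $s\in(0,1]$.

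\textbf{Step 2 (gradient).} On the lightness channel, $f_L(L)=L+k_L$ by Lemma~\ref{lem:light}. Hence
\[
L_{I'}=(1-s)L+s(L+k_L)=L+s\,k_L,
\]
which is again a translation. Differentiating, as in the proof of Lemma~\ref{lem:light}, gives $\nabla L'=\nabla L$ on the interior of $\{\alpha=1\}$, where $\alpha$ is locally constant, and wherever the $[0,100]$ clip is inactive. This is the main subtlety. Off that interior, $\nabla\alpha\neq0$ contributes the extra term $s\,k_L\nabla\alpha$. The proposition's restriction to the region $\alpha=1$ is exactly what excludes that term, and I will state this explicitly.
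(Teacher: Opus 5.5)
Your proof is correct and follows the paper's argument: linearity of expectation gives the mean, and a convex combination of $L$ with $L+k_L$ is the translation $L+s\,k_L$, whose gradient is $\nabla L$. Your residual $\E[I']-\mu_f=(1-s)(\mu-\mu_f)$ is the right form; the paper's displayed $\mu-\mu_{I'}=(1-s)(\mu-\mu_f)$ should read $s(\mu-\mu_f)$. Your added caveats are refinements the paper leaves implicit: blending in Lab rather than through $\varphi^{-1}$, working on the interior of $\{\alpha=1\}$ with the clip inactive, and the positive-affine blend that makes Lemma~\ref{lem:trimm} give the trimmed identity exactly, so your opening hedge about ``only the untrimmed mean'' can be dropped.
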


\begin{proof}
Expectation is linear, so $\mu-\mu_{I'}=(1-s)(\mu-\mu_f)$.
A convex combination of $L$ with $L+k_L$ equals $L+sk_L$, whose spatial gradient equals $\nabla L$.
\end{proof}

The deployed choice $s=0.7$ therefore leaves $30\%$ of the mean gap on purpose.
Scoring that residual as a failure of Theorem~\ref{thm:unique} confuses the endpoint map with a later convex step.

\begin{proposition}[Support]
\label{prop:support}
If $\alpha=0$ on a set $\Omega_0$, then~\eqref{eq:blend} satisfies $I'=I$ on $\Omega_0$.
Clothing labels are assigned $\alpha=0$ after every growth step, so the identity on cloth is exact up to the three-pixel smoothstep ring and compression.
\end{proposition}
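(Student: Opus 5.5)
The plan is to read the blend~\eqref{eq:blend} pointwise and use nothing beyond the fact that $\alpha$ vanishes there.
At any pixel $x\in\Omega_0$ we have $\alpha(x)=0$, so the weight on the original photograph is $1-s\alpha(x)=1$ and the weight on the transferred colour is $s\alpha(x)=0$.
Hence $I'(x)=I(x)+0\cdot\varphi^{-1}(L',a',b')(x)=I(x)$ for every strength $s\in(0,1]$.
One point needs a word: the product with zero is harmless only because $\varphi^{-1}(L',a',b')$ is finite.
This holds since $L'$ is clipped to $[0,100]$ (Step~10 of Table~\ref{tab:proc}), and by Corollary~\ref{cor:lip} the chromatic channels are finite affine images of finite colours.
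The early-exit branch of Step~9 returns $I$ outright, so the identity holds in that case as well.

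For the clothing statement, I would trace Section~\ref{sec:support} and Table~\ref{tab:proc} in order.
The matte is built in Step~6, after every growth step (Steps~4--5).
Step~6 sets $\alpha=0$ on clothing and eyeglasses labels.
Steps~4 and~5 are explicitly forbidden to enter clothing labels.
So no later operation raises $\alpha$ on a pixel carrying a clothing label.
Taking $\Omega_0$ to be the clothing-labelled set, the first part gives $I'=I$ there exactly.

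The main obstacle is pinning down what ``up to the three-pixel smoothstep ring and compression'' means.
I would argue it from two sources of deviation.
The first is the smoothstep falloff $s(t)=t^2(3-2t)$ on the outer shell of the body mask.
Wherever that shell overlaps cloth, or wherever the guided filter of~\cite{he2013} spreads weight across a label boundary, $\alpha$ can be positive on pixels within about three pixels of the body label.
These are exactly the pixels excluded from the exact identity.
The second is lossy re-encoding of the output, which perturbs pixel values independently of the colour map.
It therefore lies outside the mathematical statement, which concerns~\eqref{eq:blend} before encoding.

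I would close by stating the exact claim precisely.
On the set of clothing pixels at distance more than three pixels from any body label, $\alpha=0$ and $I'=I$ bitwise before encoding.
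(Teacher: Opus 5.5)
Your proposal is correct and matches the paper, whose entire proof is the pointwise substitution of $\alpha=0$ into~\eqref{eq:blend}; the finiteness remark, the Step~9 early exit, and the tracing of Steps~4--6 are extra care that the paper leaves implicit. One small tidy-up: since $\alpha=0$ is imposed on clothing-labelled pixels after all growth, your own argument already gives $I'=I$ on every such pixel, so the smoothstep-ring caveat can only concern pixels that are physically cloth but do not carry a clothing label, and your closing distance-restricted claim is weaker than what you have already shown.
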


\begin{proof}
Substitute $\alpha=0$ into~\eqref{eq:blend}.
\end{proof}

\section{Experiments}
\label{sec:exp}

\subsection{Protocol}
The catalog contains eleven photographs.
Nine of them contain at least one skin component whose top lies below $32\%$ of the image height and which is not a tall region starting in the upper body.
Those nine are the limb set: hands, arms, legs, and feet.
The other two contain only face-band skin under this rule and do not enter the limb tables.
This is the whole image set.
It is not a public benchmark and it is not a hundred-image study.

Three reference textures supply the target tones.
Their trimmed Lab means are tone~A, warm, $(L,a,b)=(60.93,\,19.73,\,34.10)$; tone~B, light, $(77.57,\,14.36,\,19.02)$; and tone~C, deep, $(39.58,\,19.30,\,26.42)$.
Crossed with the nine photographs this is $27$ pairs.

Every method is applied to the same limb pixels.
Statistics are estimated on those pixels and on the same reference sample, so the parser is not a confound.
A second measurement applies classical Reinhard to every pixel of the frame, which is how that method was defined~\cite{reinhard2001}, and records the mean absolute change on non-skin pixels in a $31$-pixel band around the limbs.

The baselines are~\eqref{eq:reinhard}, the linear Monge map~\cite{pitie2007cvmp}, per-channel Lab histogram matching~\cite{neumann2005}, an HSV mean shift with the value axis scaled by $0.35$, and iterative distribution transfer with $12$ random rotations and $128$ bins~\cite{pitie2007cviu}.
Twelve rotations are a finite approximation of a method that converges as the rotation count grows.
The proposed column is~\eqref{eq:srrt} at strength $s=1$, using trimmed moments.

Shading is the lightness standard-deviation ratio $\sigma(L')/\sigma(L)$ and the ratio of mean Sobel energies of $L$ inside the mask.
Both equal $1$ when Lemma~\ref{lem:light} holds and clipping is rare.
Chromatic error $\Delta C_{ab}$ is the Euclidean gap of trimmed $(a,b)$ means~\cite{cie2004}.
A scalarization of the two columns is
\begin{equation}
\label{eq:Jexp}
J_{\beta}
=
\Delta C_{ab}
+
\beta\,\abs{\frac{\sigma(L')}{\sigma(L)}-1},
\end{equation}
computed per pair and then averaged.
The weight $\beta$ is a preference.
The value $\beta=10$ prices a $10\%$ contrast error as one CIE Lab unit of chroma.
It is not an independent measurement, and a method that rescales lightness by construction pays it automatically.

\subsection{Quantitative comparison}
Table~\ref{tab:main} and Fig.~\ref{fig:metrics} summarise the $27$ pairs.
Histogram matching wins raw $\Delta C_{ab}$ because, by Lemma~\ref{lem:hist}, it copies the reference law.
It also copies the reference contrast: the lightness standard deviation falls to $0.48$ of the photograph, and on tone~B it falls to $0.31\pm0.11$.
Reinhard and the Monge map do the same through a single gain.
Their Pearson correlations with source lightness are $0.9997$ and $0.9976$.
Lemma~\ref{lem:corr} predicts this pattern exactly.
A correlation column would have hidden the defect.

\begin{table*}[t]
\caption{Limb skin, $27$ photograph--reference pairs.
$\Delta C_{ab}$ is the trimmed chromatic error (lower is closer in hue).
The $\sigma$ and $\nabla$ ratios equal $1$ when shading is unchanged.
$J_{10}$ is the scalarization~\eqref{eq:Jexp} at weight $10$.
Table~\ref{tab:beta} shows how the ranking moves when that weight changes.}
\label{tab:main}
\centering
\begin{tabular}{lcccc}
\toprule
Method & $\Delta C_{ab}$ & $\sigma(L')/\sigma(L)$ & $\nabla$ ratio & $J$ \\
\midrule
Reinhard~\cite{reinhard2001}
  & $0.21\pm0.09$ & $0.48\pm0.22$ & $0.65\pm0.25$ & $5.44\pm2.21$ \\
Monge--Kantorovich~\cite{pitie2007cvmp}
  & $0.21\pm0.10$ & $0.48\pm0.22$ & $0.66\pm0.26$ & $5.39\pm2.25$ \\
Histogram~\cite{neumann2005}
  & $0.02\pm0.01$ & $0.48\pm0.22$ & $0.68\pm0.25$ & $5.22\pm2.24$ \\
HSV shift
  & $6.25\pm3.61$ & $0.95\pm0.06$ & $0.99\pm0.07$ & $6.91\pm3.90$ \\
Iterative distribution transfer~\cite{pitie2007cviu}
  & $3.90\pm2.28$ & $0.65\pm0.14$ & $0.77\pm0.16$ & $7.39\pm2.41$ \\
Skin-restricted Reinhard~\eqref{eq:srrt}
  & $0.77\pm1.23$ & $0.97\pm0.03$ & $1.06\pm0.11$ & $1.03\pm1.47$ \\
\bottomrule
\end{tabular}
\end{table*}

The proposed map pays $0.77$ Lab units of chromatic residual.
That residual is the gamut clip and the Lab encode and decode.
It is largest on tone~B ($1.45\pm1.73$), the tone whose unclamped Reinhard lightness gain is the most destructive ($0.31$).
Lightness contrast stays at $0.974\pm0.029$ on all three tones.
The HSV shift keeps contrast and misses chroma by $6.25$ units, because a shift of HSV means is not a Lab moment match.
Iterative distribution transfer at $12$ rotations reduces contrast to $0.65$ and still leaves $3.90$ units of trimmed chromatic error.
The primary comparison is the pair of columns $\Delta C_{ab}$ and $\sigma(L')/\sigma(L)$.
Reinhard, the Monge map, and histogram matching win the first column by losing the second.

Table~\ref{tab:beta} is the same $27$ pair records scored at three weights.
At $\beta=0$ the score is $\Delta C_{ab}$ and histogram matching is first.
At $\beta=2$ and at $\beta=10$ the skin-restricted map is first, because the three lightness-rescaling methods pay for the contrast they remove.
A large weight widens a gap that the contrast column already shows.
It does not establish a ranking on its own.

\begin{table}[b]
\caption{Scalarization $J_{\beta}$ on the same $27$ limb pairs.
At $\beta=0$ the score is chromatic error alone.}
\label{tab:beta}
\centering
\begin{tabular}{lccc}
\toprule
Method & $\beta=0$ & $\beta=2$ & $\beta=10$ \\
\midrule
Reinhard & $0.21\pm0.09$ & $1.25\pm0.46$ & $5.44\pm2.21$ \\
Monge & $0.21\pm0.10$ & $1.24\pm0.47$ & $5.39\pm2.25$ \\
Histogram & $0.02\pm0.01$ & $1.06\pm0.45$ & $5.22\pm2.24$ \\
HSV shift & $6.25\pm3.61$ & $6.38\pm3.66$ & $6.91\pm3.90$ \\
IDT & $3.90\pm2.28$ & $4.59\pm2.24$ & $7.39\pm2.41$ \\
Proposed & $0.77\pm1.23$ & $0.82\pm1.27$ & $1.03\pm1.47$ \\
\bottomrule
\end{tabular}
\end{table}

\begin{figure*}[t]
\centering
\includegraphics[width=\textwidth]{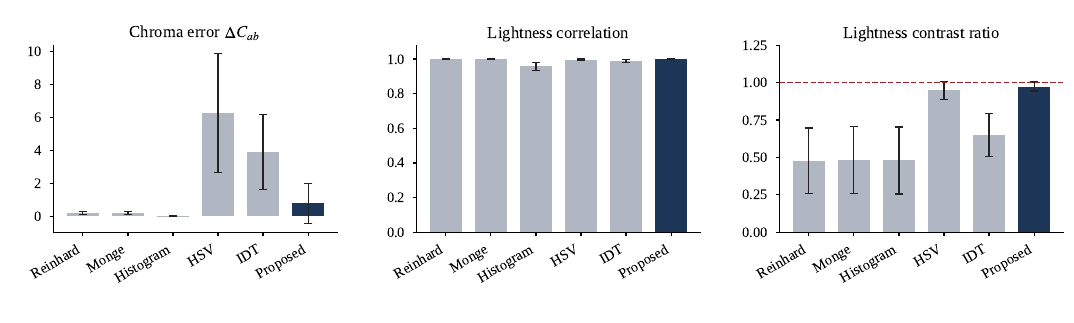}
\caption{Means and standard deviations over $27$ limb pairs.
Left: chromatic error.
Centre: correlation of lightness with the source.
It stays high for every affine method and therefore does not measure shading (Lemma~\ref{lem:corr}).
Right: lightness contrast ratio.
The dashed line is perfect shading preservation.
Reinhard, the Monge map, and histogram matching sit near $0.5$.
The skin-restricted map sits on the dashed line.}
\label{fig:metrics}
\end{figure*}

The deployed pipeline uses the same map at strength $s=0.7$.
On the same limb pixels its contrast ratio is $0.986\pm0.024$ and its lightness correlation is $0.997$.
Its chromatic error rises to $4.82\pm2.42$, the residual $(1-s)$ gap of Proposition~\ref{prop:strength}.
Off the mask, the mean absolute change is $0.69\pm0.37$ grey levels.
Global Reinhard, in a band around the limbs, changes non-skin pixels by $57.5\pm16.6$ grey levels.
Proposition~\ref{prop:support} is the reason.

\subsection{Sensitivity of the constants the theorem does not fix}
\label{sec:sens}
Theorem~\ref{thm:unique} holds for every nonempty closed gain interval and every trim window.
The endpoints $[0.72,1.18]$ and the central $84\%$ (trim fraction $\alpha=0.08$, percentiles $8$ and $92$) are catalog operating points.
Table~\ref{tab:sens} varies them on the same $27$ limb pairs, with the other control held at the deployed value.

The chromatic mean match does not depend on the gain, so widening the box barely moves $\Delta C_{ab}$: $0.74\pm1.20$ with no clamp, $0.77\pm1.23$ on the deployed box, and $0.80\pm1.28$ on $[0.90,1.10]$.
The lightness ratio stays at $0.97$ throughout, because the lightness gain is locked at $1$.
What the box changes is chroma contrast.
Of the $54$ chromatic gains ($27$ pairs, two channels), $42$ fall outside $[0.72,1.18]$.
Unclamped, the mean $b$-channel standard-deviation ratio falls to $0.45$, and the smallest gain on the set is $0.15$.
The deployed box holds the mean $b$ ratio at $0.72$ and the mean $a$ ratio at $0.82$.
The tighter box $[0.90,1.10]$ preserves still more of the source chroma contrast.
The interval is a preference between source chroma contrast and the reference scale, and it is the Lipschitz bound of Corollary~\ref{cor:lip}.
It is not a fitted optimum.

The trim does not move the lightness ratio, as Lemma~\ref{lem:trimm} predicts for a unit lightness gain.
Among the four windows, the central $84\%$ has the lowest chromatic residual: $0.77\pm1.23$, against $0.92\pm1.18$ with no trim and $0.86\pm1.19$ when $15\%$ is removed from each tail.

\begin{table*}[t]
\caption{Operating-point sensitivity on the $27$ limb pairs.
Left: chromatic gain interval, trim fixed at the central $84\%$.
The $a$ and $b$ columns are mean ratios of channel standard deviation to the source.
Right: trim fraction $\alpha$, gain interval fixed at $[0.72,1.18]$.
A window $\alpha$ keeps the central $1-2\alpha$ of each channel.}
\label{tab:sens}
\centering
\begin{tabular}{lcccc}
\toprule
Gain interval & $\Delta C_{ab}$ & $\sigma(L')/\sigma(L)$ & $a$ ratio & $b$ ratio \\
\midrule
Unclamped & $0.74\pm1.20$ & $0.973\pm0.030$ & $0.73$ & $0.45$ \\
$[0.50,1.50]$ & $0.75\pm1.21$ & $0.973\pm0.029$ & $0.74$ & $0.56$ \\
$[0.72,1.18]$ & $0.77\pm1.23$ & $0.974\pm0.029$ & $0.82$ & $0.72$ \\
$[0.90,1.10]$ & $0.80\pm1.28$ & $0.974\pm0.029$ & $0.94$ & $0.90$ \\
\bottomrule
\end{tabular}
\hspace{1.5em}
\begin{tabular}{lcc}
\toprule
Trim $\alpha$ & $\Delta C_{ab}$ & $\sigma(L')/\sigma(L)$ \\
\midrule
$0$ (no trim) & $0.92\pm1.18$ & $0.973\pm0.030$ \\
$0.05$ & $0.79\pm1.23$ & $0.973\pm0.029$ \\
$0.08$ & $0.77\pm1.23$ & $0.974\pm0.029$ \\
$0.15$ & $0.86\pm1.19$ & $0.974\pm0.030$ \\
\bottomrule
\end{tabular}
\end{table*}

\subsection{Face band, without photographs}
The limb rule excludes every component whose centroid lies in the top $32\%$ of the image.
Those components are the face and neck band.
They were scored with the same Lab statistics and were not saved as images.
All eleven photographs contain such a component, hence $33$ pairs.
On that band the skin-restricted map has $\Delta C_{ab}=0.60\pm0.89$ and lightness-contrast ratio $0.97\pm0.04$.
Classical Reinhard has $0.23\pm0.08$ and $0.41\pm0.17$.
The split is the same as on the limbs: a smaller chromatic residual bought by cutting contrast to under half.
These numbers are mask-restricted moments.
They are not a study of eyes, lips, hairlines, or identity.
Those catalog faces are not displayed.
Appendix~\ref{app:visual} shows one other portrait, and it is not part of this measurement.

\subsection{Limb images}
Fig.~\ref{fig:hand} is the clasped hands and forearms of one catalog photograph, as the source and as references 1--14.
The sleeve and the garment stay outside the transfer.
Fig.~\ref{fig:leg} is a lower leg and foot.
On that foot, Reinhard, the Monge map, and histogram matching are run on tone~B, the light reference and the hardest contrast case in Table~\ref{tab:main}.
They erase the gradient along the shin.
The last three columns are~\eqref{eq:srrt} on tones A, B, and C, so tone~B is the same reference the classical columns use.
The same shading field is visible under all three pigments: by Lemma~\ref{lem:light} the lightness pattern is not a function of the reference variance.
Every column of Fig.~\ref{fig:leg} is masked, so the sandal does not move.

\begin{figure*}[t]
\centering
\includegraphics[width=\textwidth]{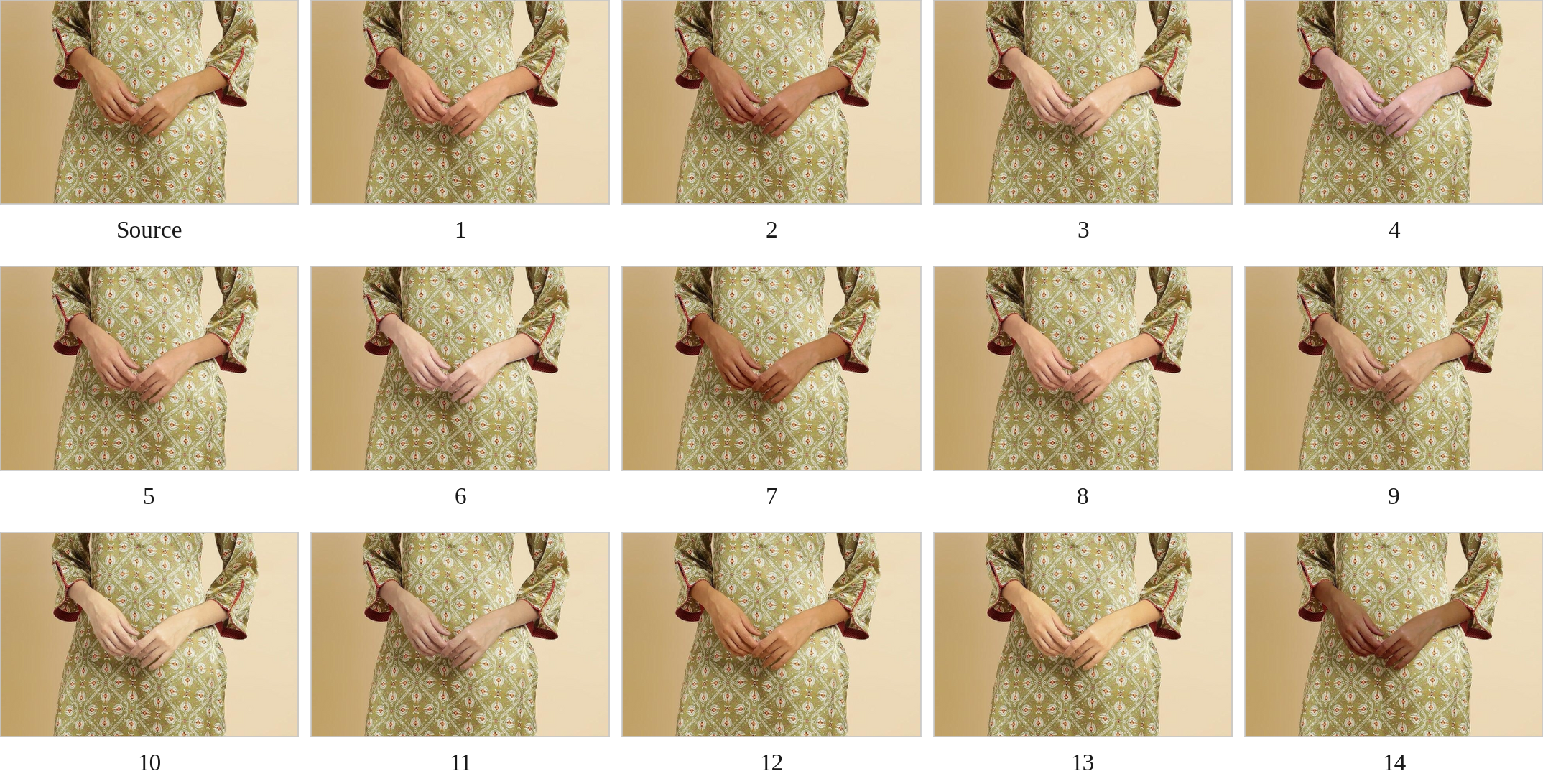}
\caption{Clasped hands and forearms, cropped from one catalog photograph.
Left to right, top to bottom: the source, then references 1--14.
The sleeve and the garment do not move.
The face is not shown.}
\label{fig:hand}
\end{figure*}

\begin{figure*}[t]
\centering
\includegraphics[width=\textwidth]{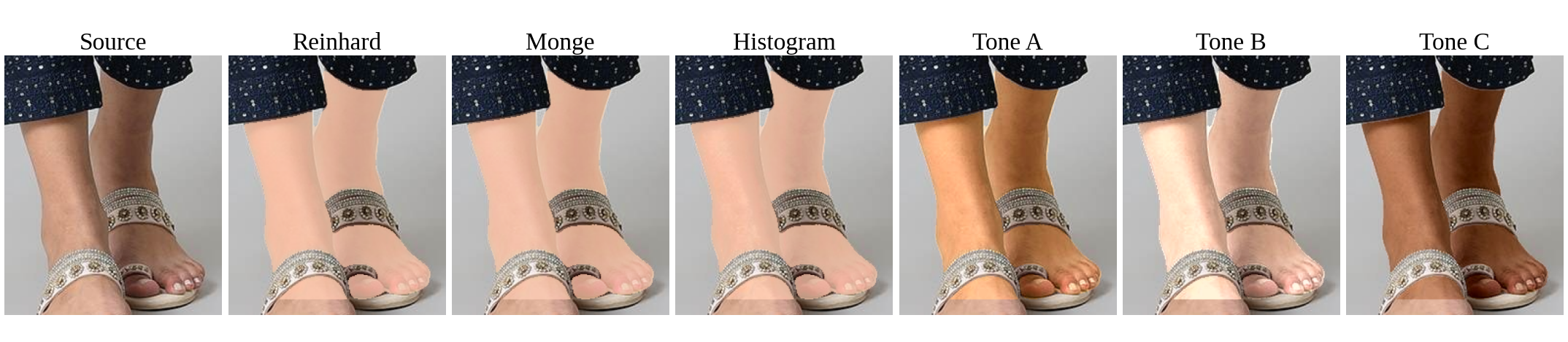}
\caption{Lower leg and foot.
Left to right: source; Reinhard, Monge, and histogram matching on tone~B; then the skin-restricted map on tones~A, B, and~C.
The shin gradient and the ankle shadow remain for tones~A--C and collapse for Reinhard, Monge, and histogram matching.}
\label{fig:leg}
\end{figure*}

\subsection{Scope}
The image set is eleven catalog photographs and three reference tones: $27$ limb pairs and $33$ face-band pairs.
There is no held-out public benchmark, no hundred-image collection, and no preference study.
The catalog faces are reported only as the mask statistics above.
Appendix~\ref{app:visual} is one further portrait, qualitative, and not part of either table.
The cloth thresholds in~\eqref{eq:cloth} and the local-fill weights in~\eqref{eq:local} are not ablated; there is no labeled set on which a false-positive rate could be computed.
Theorem~\ref{thm:infeasible} does not need a larger set to place lightness-rescaling maps outside the feasible set.
It does not say that no future map could also satisfy the constraint.
A gradient-preserving optimisation in the sense of Xiao and Ma~\cite{xiao2009}, restricted to the same diagonal family, would reach the same lightness translation, which here is closed form.
The optional face stage, a face parser~\cite{yu2018,lee2020maskgan}, a restoration network~\cite{wang2021gfpgan}, and a Laplacian texture split~\cite{burt1983,tomasi1998}, is not part of either table.

\section{Conclusion}
Inside the diagonal affine family, catalog skin transfer is a six-parameter problem with a hard constraint on lightness.
The shading rule forces the lightness gain to $+1$ and the lightness shift to the difference of means.
One-dimensional quadratic transport on each chromatic axis, projected onto a gain interval, fixes the other four.
The skin-restricted Reinhard transform is the solution of that square system.
The interval $[0.72,1.18]$ and the central-$84\%$ trim are operating points of the solution, not consequences of the counting argument: the trim window with the lowest chromatic residual on this set is the one in use, and the gain box is what stops chroma contrast from falling to $0.45$.
The trimmed estimator inherits the same parameter lock because positive affine maps commute with quantile windows.
Classical moment matching, histogram matching, and Gaussian Monge maps remain optimal for the unconstrained colour problem and are infeasible for this one whenever the reference lightness variance differs from the photograph.
On the $27$ limb pairs it is the only tested map that holds lightness contrast within $3\%$ of the photograph while keeping chromatic error under one CIE Lab unit.
The same contrast split appears on the face band of the catalog set, which is reported as mask statistics.
Appendix~\ref{app:visual} is qualitative and separate.
A scalarization that charges contrast error ranks this map first only once that charge is turned on; with the charge off, histogram matching wins on chroma alone.

\clearpage
\onecolumn
\appendices
\section{Qualitative Comparison across Fourteen References}
\label{app:visual}
One hand and one portrait were each transferred to fourteen references, numbered in the order of that set.
Every panel inside a plate is the same crop.
The plates are qualitative.
They are not part of the twenty-seven limb pairs in Table~\ref{tab:main}.

\begin{figure}[!ht]
\centering
\includegraphics[width=\textwidth]{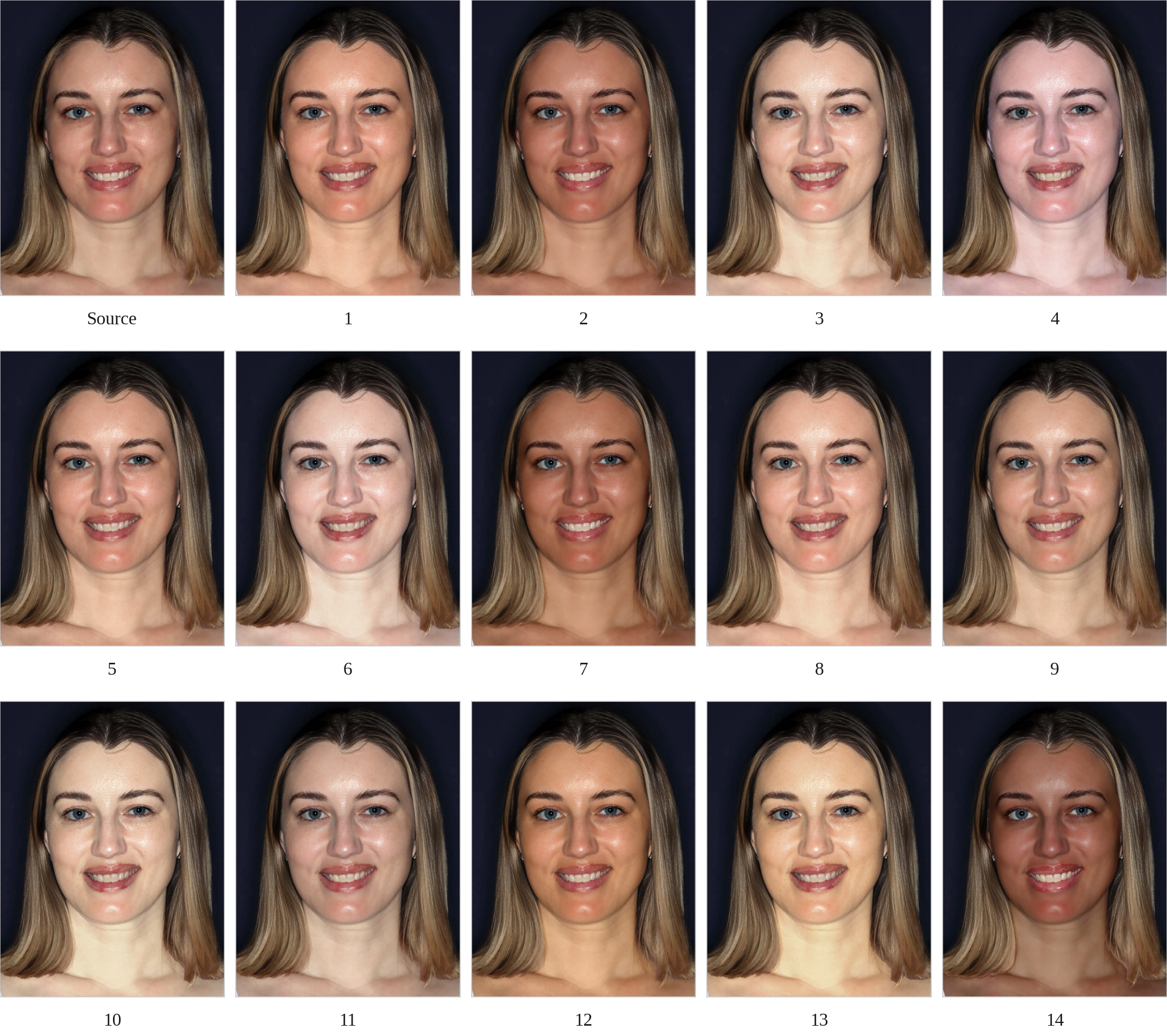}
\caption{Portrait across fourteen references.
The first panel is the source.
Panels 1--14 follow the appendix reference order.
Hair, eyes, and teeth remain at the source colour.
This plate is not part of Table~\ref{tab:main}.}
\label{fig:app-face}
\end{figure}

\clearpage
\begin{figure}[!ht]
\centering
\includegraphics[width=\textwidth]{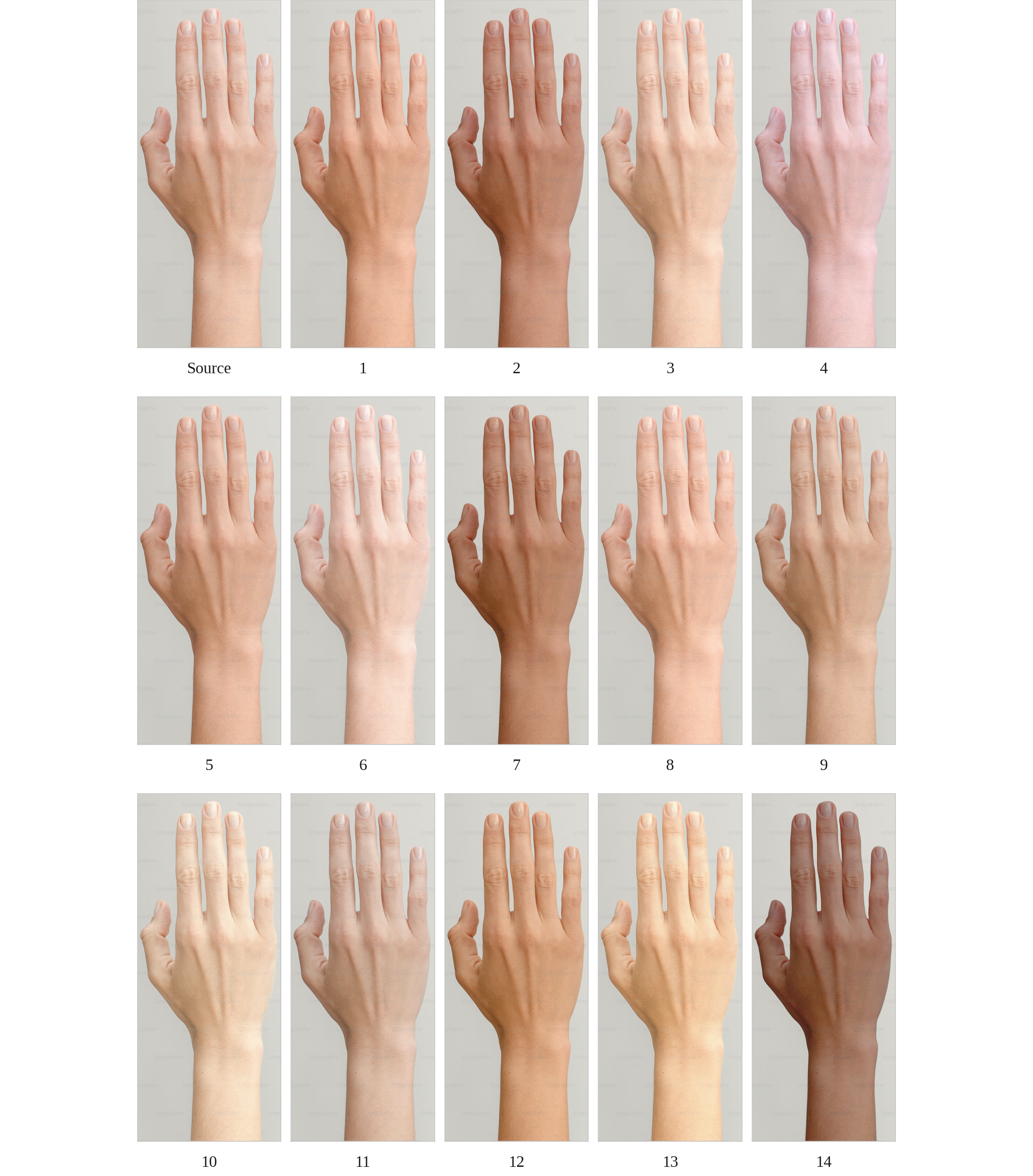}
\caption{Hand across the same fourteen references, in the same order as Fig.~\ref{fig:app-face}.
Knuckle creases and the wrist shadow stay in place; the pigment changes.}
\label{fig:app-hand}
\end{figure}

\end{document}